\documentclass[letterpaper, 10 pt, conference]{ieeeconf}  % Comment this line out if you need a4paper

\IEEEoverridecommandlockouts                              % This command is only needed if 
\usepackage[T1]{fontenc}
\usepackage{preamble}

\title{\LARGE \bf
Line-of-Sight-Constrained Multi-Robot Mapless Navigation via Polygonal Visible Region
}

\author{Ruofei Bai, Shenghai Yuan, Xinhang Xu, Xingyu Ji, Xiaowei Li, Hongliang Guo, Wei-Yun Yau, Lihua Xie% <-this % stops a space
}

\begin{document}

\maketitle
\thispagestyle{empty}
% \pagestyle{empty}
% 启用页码显示
% \pagestyle{plain}

%%%%%%%%%%%%%%%%%%%%%%%%%%%%%%%%%%%%%%%%%%%%%%%%%%%%%%%%%%%%%%%%%%%%%%%%%%%%%%%%
\begin{abstract}
Multi-robot systems rely on underlying connectivity to ensure reliable communication and timely coordination.
This paper studies the line-of-sight (LoS) connectivity maintenance problem in multi-robot navigation with unknown obstacles.
Prior works typically assume known environment maps to formulate LoS constraints between robots, which hinders their practical deployment.
To overcome this limitation, we propose an inherently distributed approach where each robot only constructs an egocentric visible region based on its real-time LiDAR scans, instead of endeavoring to build a global map online.
The individual visible regions are shared through distributed communication to establish inter-robot LoS constraints, which are then incorporated into a multi-robot navigation framework to ensure LoS-connectivity.
% Specifically, we propose a more reliable and efficient LoS-distance metric than previously used ones based on the polygonal approximation of visible regions.
% Moreover, we enhance the robustness of connectivity maintenance by proposing a more accurate LoS-distance metric, which further enables flexible topology optimization to eliminate redundant connections that require substantial effort to maintain under external navigation tasks.
Moreover, we enhance the robustness of connectivity maintenance by proposing a more accurate LoS-distance metric, which further enables network topology optimization by eliminating redundant and effort-demanding connections for improved navigation efficiency.
% Moreover, we achieve robustness LoS-connectivity maintenance by designing more accurate LoS-distance evaluation metric, and propose topology optimization that flexibily omits redundant and effort-demanding connections to improve overall navigation efficiency.
% Specifically, we propose the polygonal approximation of visible regions that supports reliable and efficient LoS-distance evaluation between robots, which further enables us to optimize the connectivity topology by omitting redundant and effort-demanding connections to mitigate their adverse impact on navigation efficiency.
% To minimize the effect of connectivity constraints to navigation efficiency, we optimize the connectivity topology by omitting redundant and effort-demanding connections, which can be directly implemented by masking out corresponding edges from the weighted graph Laplacian matrix of robots' underlying network.
% The proposed framework is evaluated with extensive multi-robot navigation and exploration tasks in both simulation and real-world experiments, focusing on its robustness, efficiency and practicality.
The proposed framework is evaluated through extensive multi-robot navigation and exploration tasks in both simulation and real-world experiments. 
Results show that it reliably maintains LoS-connectivity between robots in challenging environments cluttered with obstacles, even under large visible ranges and fragile minimal topologies, where existing methods consistently fail.
Ablation studies also reveal that topology optimization boosts navigation efficiency by around 20\%, demonstrating the framework’s potential for efficient navigation under connectivity constraints.

\end{abstract}

%%%%%%%%%%%%%%%%%%%%%%%%%%%%%%%%%%%%%%%%%%%%%%%%%%%%%%%%%%%%%%%%%%%%%%%%%%%%%%%%

\section{Introduction}

Multi-robot navigation holds numerous applications in search and rescue~\cite{tian2020search}, collaborative inspection~\cite{cao2025cooperative}, and autonomous exploration~\cite{10577228}.
While individual robots have limited onboard communication and sensing capabilities, multi-robot systems can overcome such limitations and operate in wider areas.
To ensure reliable inter-robot communication and timely coordination, it is essential to maintain a connected network between robots~\cite{amigoni_MultirobotExploration_2017}.
% Due to onboard resource limitations, individual robots usually have limited communication and sensing capabilities.
% Instead, multi-robot systems can overcome such limitations and operate in wider areas by maintaining a connected network that ensures reliable inter-robot communication and timely coordination~\cite{amigoni_MultirobotExploration_2017}.
% While previous work has studied multi-robot connectivity maintenance problem considering limited communication ranges~\cite{shi_CommunicationAwareMultirobot_2021}, 
However, existing methods usually omit the presence of obstacles that degrade signals and prevent mutual observation between robots~\cite{shi_CommunicationAwareMultirobot_2021}, thereby compromising information sharing and mutual observation~\cite{liu_RelativeLocalizationEstimation_2023}.
% \red{For example, UWB-based localization system can be disrupted by obstacles.}
To this end, this paper considers a situation where each robot in a team must be within at least one of its neighbors' \emph{line-of-sight} (LoS) and communication range to be connected, while avoiding collisions, as shown in Fig.~\ref{fig_wide_grid}.
The problem is challenging as robots need to maintain LoS-connectivity in the presence of obstacles while performing external navigation tasks~\cite{amigoni_MultirobotExploration_2017}.

\begin{figure}[!t]
\vspace{6pt}
\centering
\includegraphics[width=\linewidth]{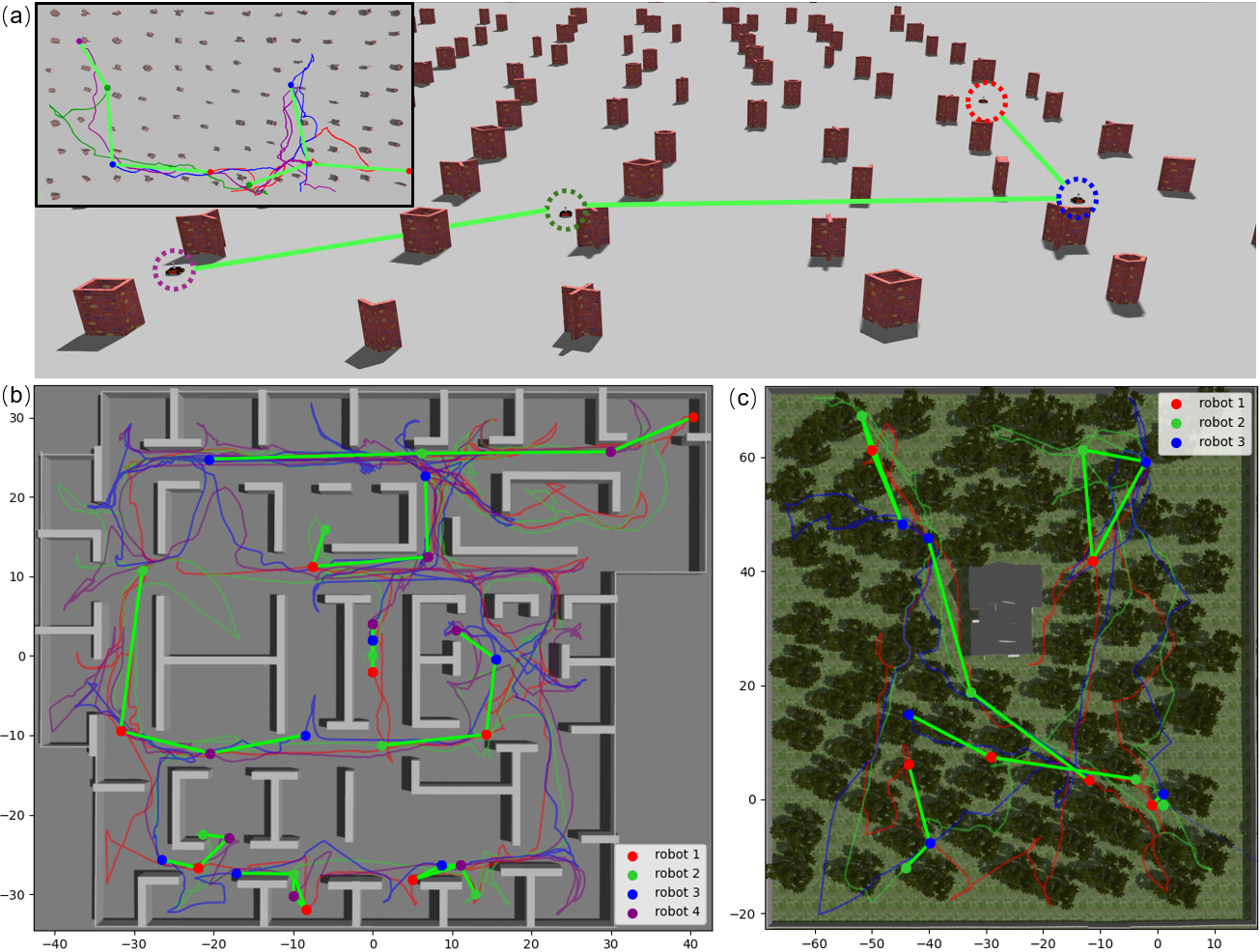}
\vspace{-20pt}
\caption{(a) Four robots navigate in a simulation environment cluttered with small irregular obstacles while always maintaining LoS-connectivity ($100m\times 50m$). (b) The trajectories of four robots and snapshots of their connectivity graphs when exploring an initially unknown garage environment~\cite{10577228} ($87m\times 69m$). (c) Three robots exploring an outdoor forest environment adapted from~\cite{cao2023representation} ($86m\times 95m$).}
\label{fig_wide_grid}
\vspace{-10pt}
\end{figure}

Prior studies on related problems typically assume known environmental models~\cite{amigoni_MultirobotExploration_2017, xia_RELINKRealTime_2023, robuffogiordano_PassivitybasedDecentralized_2013, nestmeyer_DecentralizedSimultaneous_2017}, where obstacles are either represented by a predefined set of obstacle points~\cite{robuffogiordano_PassivitybasedDecentralized_2013, amigoni_MultirobotExploration_2017, chen_MultiUAVDeployment_2023}, or a shared occupancy map between robots~\cite{shi_CommunicationAwareMultirobot_2021}.
% For example, traditional connectivity control methods usually assume a predefined set of obstacle points, based on which potential functions can be designed to keep the line-of-sight between two robots away from obstacle points~\cite{robuffogiordano_PassivitybasedDecentralized_2013, amigoni_MultirobotExploration_2017, chen_MultiUAVDeployment_2023}.
% Besides that, the LoS-connectivity between robots is usually verified based on shared occupancy maps of environments in discrete decision-making methods~\cite{shi_CommunicationAwareMultirobot_2021}.
However, reliance on prior maps prevents above methods from real-world applications, where the environment can be unknown with unpredictable and irregular obstacles that obstruct LoS connectivity between robots.
% However, the reliance on prior maps prevents existing works from real-world applications, where the presence of unpredictable and irregular obstacles can obstruct LoS connectivity between robots.
Moreover, it remains a non-trivial task to collaboratively build a map among robots to verify their LoS-connectivity during online operation.

% These existing works usually assume known environment models, like occupancy maps~\cite{xia_RELINKRealTime_2023} or obstacle point sets~\cite{robuffogiordano_PassivitybasedDecentralized_2013, nestmeyer_DecentralizedSimultaneous_2017}, for potential function design, or search for future target topology for robots.
% Moreover, in unknown environments with risks and complications, robots may have no access to obstacle information, in which case existing methods are not applicable.

% \begin{figure}[!t]
% \centering
% \includegraphics[width=\linewidth]{image/fig_face3.png}
% \caption{Real-time LoS maintenance of four robots while exploring an unknown environment. Robots' visible regions (enclosed by curves with different colors) are constructed from their real-time point cloud measurements.
% The connectivity between robots is shown by the light green edges. To maintain connectivity, each robot only needs local information from its one-hop neighbors.}
% \label{fig_face}
% \end{figure}

To eliminate the requirements on prior maps, we propose a novel idea that, instead of endeavoring to construct a shared global map, each robot only needs to explicitly construct an egocentric \emph{visible region} based on its real-time LiDAR scans.
The visible region describes the space that is currently visible from a robot, which is shared between neighboring robots to formulate their inter-robot LoS constraints.
The LoS constraints are then integrated into a graph Laplacian-based controller~\cite{robuffogiordano_PassivitybasedDecentralized_2013} for connectivity maintenance. 
% Specifically, we propose a polygonal approximation of the visible region that supports differentiable, efficient, and safe LoS-distance evaluation, which is integrated into a potential function to keep robots within others' visible regions to be connected.
% Unlike maintaining a centralized global map, the approximated visible region holds a unified analytical representation that can be conveniently shared among robots for LoS evaluation, inherently supporting distributed computation and communication.
% we propose an efficient method to establish LoS constraints directly from robots' real-time LiDAR scans.
% While raw LiDAR scans can include empty cavities or gaps due to its limited resolution, our method explicitly construct an enclosed visible region of each robot that supports setting different degrees of aggressiveness.
% Based on that, a differentiable LoS-distance metric is proposed and 
% For connectivity maintenance,
Although preliminary work is proposed in~\cite{bai2025realmrealtimelineofsightmaintenance}, it suffers from brittle connectivity due to the inaccurate LoS-distance metric and inefficient navigation caused by redundant connections.
Motivated by these limitations, we make substantial modifications to achieve more robust LoS-connectivity maintenance and flexible navigation behaviors. 
In particular, we reformulate a polygonal approximation of visible regions to support accurate LoS-distance evaluation, and propose an online topology optimization approach that facilitates robots' navigation efficiency.
% in several aspects. 
% First, we propose a polygonal approximation of visible regions that supports more accurate, efficient, and safe LoS-distance evaluation compared with previous methods.
% The metric holds an analytical expression and is proved to be a reliable lower-bound approximation of the actual LoS distance, ensuring a conservative assessment even under aggressive visible region formulation.
% Second, we propose a topology optimization approach that identifies and masks out redundant and effort-demanding connections, which facilitates robots' navigation efficiency.
Consequently, our method can reliably maintain global connectivity of robots even with fragile topologies and large visible ranges, while previous methods fail under such conditions. 
Our contributions are summarized as follows:

\begin{enumerate}
    \item We propose a polygonal approximation of a robot' visible region constructed from real-time LiDAR scans, which supports an efficient and reliable metric for LoS-distance evaluation between robots;
    \item We prove that the proposed metric lower-bounds the actual LoS distance, serving as a safe and conservative approximation regardless of the coverage range of visible regions.
    \item We propose a topology optimization approach in graph Laplacian-based controllers, where the network topology is flexibly adapted to minimize the efforts for connectivity maintenance between robots considering external navigation tasks.
    % \item \red{[TODO] Visibility-aware path planning with preference on neighboring robot's visible region, combining connectivity maintenance and target navigation for improved navigation efficiency.}
\end{enumerate}

The robustness and efficiency of the proposed method is extensively evaluated with multi-robot navigation and exploration tasks in unknown cluttered environments, where robots can successfully navigate to their targets while robustly maintaining LoS-connectivity. We also validate the applicability of the proposed framework in real-world experiments.

\section{Related Works}

\subsection{Line-of-Sight-Constrained Multi-Robot Navigation}

Existing work considering LoS-connectivity maintenance can be categorized into continuous and discrete types~\cite{amigoni_MultirobotExploration_2017}, where LoS constraints are usually formulated based on known global maps, such as obstacle points~\cite{robuffogiordano_PassivitybasedDecentralized_2013, yang_MinimallyConstrained_2023}, occupancy maps~\cite{stump_VisibilitybasedDeployment_2011, xia_RELINKRealTime_2023}, etc.
% Continuous connectivity requires robots to always maintain a connected network during their operations.
Giordano~\etal~proposed a potential function-based method that guarantees continuous connectivity by preserving the positivity of the Fiedler eigenvalue~\cite{robuffogiordano_PassivitybasedDecentralized_2013, amigoni_MultirobotExploration_2017}.
The LoS constraints are captured by the distance from the closest obstacle point to the LoS segment joining two robots.
% Alternatively, the LoS segment is approximated with a minimum volume enclosing ellipsoid to analytically formulate LoS constraints in~\cite{yang_MinimallyConstrained_2023}.
% They further consider the estimation uncertainty in robot positions in~\cite{yang_DecentralizedMultirobot_2024}. 
Chen~\etal~consider the LoS constraints in multi-UAV deployment by restricting two robots to be within a common separating hyperplane of obstacles~\cite{chen_MultiUAVDeployment_2023}.
In contrast, discrete connectivity only requires robots to be connected at certain time steps~\cite{hollinger_MultirobotCoordination_2012}.
Typically, a \emph{target topology} is first determined to ensure connectivity and then optimized to consider additional objectives, such as distance or information gain~\cite{shi_CommunicationAwareMultirobot_2021, stump_VisibilitybasedDeployment_2011, dutta_MultirobotInformative_2019, xia_RELINKRealTime_2023}.
% Stump~\etal~apply polygonal decomposition of an environment, after which local optimization of robots' positions is restricted in a common polyhedron to ensure LoS~\cite{stump_VisibilitybasedDeployment_2011}.
Xia~\etal~formulate the LoS constraints based on robots' visible regions described by star convex polytopes~\cite{xia_RELINKRealTime_2023}.
% However, local optimization of robots' positions may lead to potential LoS loss.
However, as \emph{local optimization} modifies robots' positions, the corresponding changes in their visible regions may lead to potential loss of LoS after optimization.
% Li~\etal~propose a reinforcement learning-based approach for connectivity maintenance based directly on LiDAR measurements~\cite{li_DecentralizedGlobal_2022}.
% However, the method omits LoS constraints and has no connectivity guarantee. 

As a preliminary attempt to alleviate the reliance on prior maps, Bai~\etal~mitigate the limitation of~\cite{xia_RELINKRealTime_2023} by applying the concept of visible region (constructed from onboard LiDAR scans) to continuous connectivity maintenance~\cite{bai2025realmrealtimelineofsightmaintenance}.
However, it suffers from brittle connectivity due to the inaccurate LoS-distance metric and inefficient navigation.
% While they modify the previous LoS-distance metric to encode both the urgency and sensitivity of losing LoS, the resulting method only works with visible regions that have conservative coverage.
% The reason is that both the original metric and its derived forms yield increasing errors as the sensing range increases.
% They also modify the original LoS-distance metric defined over the visible region with an angle-related coefficient to encode both the urgency and sensitivity of losing LoS. 
% However, it can only apply to visible regions with limited sensing ranges, because both the original LoS-distance metric and its derivatives fail to provide accurate LoS-distance evaluation, whose errors become prominent as the sensing ranges increase.
The above limitations are addressed in this work through a more reliable metric that supports robust LoS maintenance, which further enables flexible topology optimization for efficient robot navigation.
% Motivated by these limitations, this work reformulates an efficient and reliable LoS-distance metric that supports robust LoS maintenance and allows flexible connectivity topology of robots for efficient navigation performance.
% In this paper, we identify the reason behind this is that both the original LoS-distance metric and its derivatives fail to provide accurate LoS-distance evaluation, whose errors become prominent as the sensing ranges increase. 
% Such a limitation not only confines robots' operation ranges, but also can occasionally lead to connectivity loss in practical discrete control. 

\subsection{Topology Optimization in Connectivity Maintenance}

Ensuring team connectivity often imposes constraints on robot navigation.
To alleviate this, previous studies have proposed maintaining only the necessary connections among robots rather than preserving all of them~\cite{xia_RELINKRealTime_2023, shi_CommunicationAwareMultirobot_2021}. 
% The connectivity topology is optimized to reduce its effect to external navigation tasks.
% With prior maps, robots' topology can be directly optimized by searching for a minimum spanning tree against the occupancy maps.
Shi~\etal~studied the topology optimization in a communication-constrained multi-robot exploration problem~\cite{shi_CommunicationAwareMultirobot_2021}, where robots' future positions are first selected to maximize coverage, and then refined with minimum deviations to form a connected minimum spanning tree (MST).  
Xie~\etal~studied the problem of deploying the minimum number of robots to relay messages between a base station and several client points~\cite{xia_RELINKRealTime_2023}.
A Steiner Tree is optimized to determine the positions of relay robots that connect the base station to the target clients.
Similar to our work, Yang~\etal~studied the minimally LoS-connectivity constrained multi-robot coordination problem, where only a selected minimum spanning tree is maintained with control barrier functions~\cite{yang_MinimallyConstrained_2023, luo_BehaviorMixing_2020}.
% Similar to our works, Yang~\etal~studied the minimally LoS-connectivity constrained multi-robot coordination problem, where only a selected minimum spanning tree against robots' connectivity network is maintained, and the connections have less violation between the connectivity maintenance and external navigational movement~\cite{yang_MinimallyConstrained_2023, luo_BehaviorMixing_2020}.
% The connectivity of robots is guaranteed by designing control barrier functions based on known obstacle points in the environment.
Inspired by this work, we introduce a convenient topology optimization method within a graph Laplacian-based connectivity controller, which prunes unnecessary connections that require substantial effort to maintain under external navigation tasks, thereby improving navigation efficiency.

\section{Preliminaries}

\begin{figure*}[!t]
\vspace{6pt}
\centering
\includegraphics[width=\linewidth]{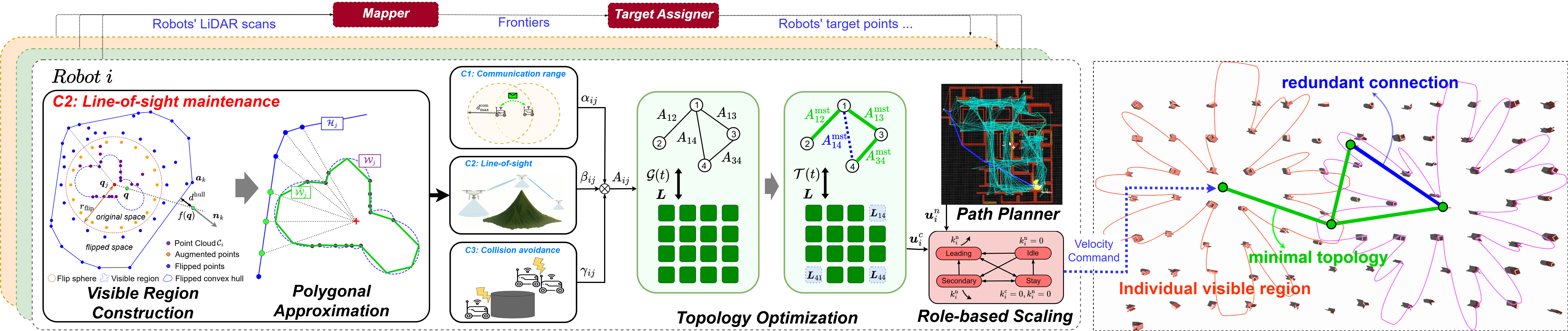}
\vspace{-18pt} 
\caption{Framework of LoS-connectivity constrained multi-robot navigation method.
Each robot independently constructs its visible region with the polygonal approximation, based on real-time LiDAR scans.
After sharing with immediate neighbors, three types of connectivity constraints are formulated as the weight of the robots' connectivity graph, which is then proceeded with topology optimization.
Next, the masked graph Laplacian matrix is used to derive the connectivity velocity $\boldsymbol{u}_{i}^{c}$, which is then fused with external navigational velocity $\boldsymbol{u}_{i}^{n}$ through the role-based scaling module, following the design in~\cite{bai2025realmrealtimelineofsightmaintenance}. 
Finally, each robot is driven independently by the fused velocity command to maintain LoS-connectivity between robots while navigating to its own target.
The navigational velocity is obtained from a mapless path planner proposed in~\cite{yang_FARPlanner_2022}.
The framework can also be extended for multi-robot exploration by integrating a multi-robot mapper for frontier extraction and a navigation-target assigner, as exemplified in Fig.~\ref{fig_wide_grid}(b)(c).
% The diagram particularly illustrates key modules including visible region construction~\cite{bai2025realmrealtimelineofsightmaintenance}, radial angle-based polygonal approximation, and connectivity topology optimization.
% The role-based scaling module aims to solve navigational conflicts and follows the design in~\cite{bai2025realmrealtimelineofsightmaintenance}. 
}
\label{fig_framework}
\vspace{-10pt}
\end{figure*}

\subsection{Graph Laplacian and Generalized Graph Connectivity}

Given a graph $\G = \langle \V, \E \rangle $ with $N$ vertices, the graph Laplacian matrix of $\G$ is defined as $\Lp = D - A$, where $A\in \mathbb{R}^{N\times N}$ is the adjacency matrix with an element $A_{ij} = 0$ if $(i, j)\notin \E$, and $A_{ij} > 0$ otherwise; $D\in \mathbb{R}^{N\times N}$ is a diagonal degree matrix where $D_{ii} = \sum_{j = 1}^{N}A_{ij}$ and $D_{ij}= 0$ when $i\ne j$. 
% As $\Lp$ is positive semi-definite, the $N$ eigenvalues of $\Lp$ are non-negative. 
% The $N$ eigenvalues of $\Lp$ can be sorted as $0= \lambda_1 \le \lambda_2 \le ... \le \lambda_{N}$, where the smallest eigenvalue $\lambda_1$ is equal to zero by construction. 
The second-smallest eigenvalue $\lambda_2$ of $\Lp$, usually referred to as the Fiedler eigenvalue~\cite{fiedler1973algebraic}, can be used to check the connectivity of $\G$.
It holds the property that $\lambda_2 > 0$ if the graph $\G$ is connected;
otherwise, $\lambda_2 = 0$ if $\G$ is disconnected.

By encoding the satisfaction of inter-robot connectivity constraints into the edge weight $A_{ij}$, the Fiedler eigenvalue $\lambda_2$ of the \emph{weighted} graph Laplacian matrix of robots' underlying connectivity graph can reflect the \emph{generalized} graph connectivity~\cite{robuffogiordano_PassivitybasedDecentralized_2013}.
As in~\cite{robuffogiordano_PassivitybasedDecentralized_2013,nestmeyer_DecentralizedSimultaneous_2017}, after defining a potential function $V^{\lambda}(\lambda_2) = \frac{1}{\lambda_2 - \lambda_{2}^{\text{min}}}$ to enforce the Fiedler eigenvalue $\lambda_2$ being larger than a preferred lower bound $\lambda_{2}^{\text{min}} >0$, 
the gradient direction for a robot $i\in \mathcal{R}$ to maintain the connectivity of $\mathcal{G}$ can be derived as~\cite{yang_DecentralizedEstimation_2010}
\begin{equation}
\small
    \boldsymbol{u}^{\text{c}}_i = -\frac{\partial V^{\lambda}(\lambda_2)}{\partial \lambda_2} \cdot \frac{\partial \lambda_2}{\partial \q_{i}} = -\frac{\partial V^{\lambda}(\lambda_2)}{\partial \lambda_2}\cdot \sum_{j\in \mathcal{N}_{i}} \frac{\partial A_{ij}}{\partial \q_i}(v_{2_i} - v_{2_j})^{2},
\label{eq_connectivity_force}
\end{equation}
where $v_{2_i}$ and $v_{2_j}$ are the $i$-th and $j$-th elements of the normalized eigenvector $\boldsymbol{v}_2$ corresponding to $\lambda_{2}$, respectively. 
Following the above concepts, this work ensures LoS-connectivity between robots by encoding connectivity constraints into the weighted graph Laplacian matrix of robots' underlying graphs.

\subsection{Visible Region Construction from Point Cloud}
\label{sec_preliminary_visibility}

% \begin{figure}[t]
% % \vspace{6pt}
% \centering
% \subfloat[]{\includegraphics[width=.5\linewidth]{image/fig_visible_regions.png}}\hspace{1pt}
% \subfloat[]{\includegraphics[width=.40\linewidth]{image/fig_stuck5.png}}
% \vspace{-5pt} 
% \caption{(a) Various aggressiveness of visible regions constructed with different $r_{\text{flip}}$ (m), where the robot (red cross) is surrounded by numerous small obstacles. 
% (b) Four robots get stuck around an obstacle when using a graph Laplacian-based connectivity controller, although the preferred $\lambda_{2}^{\text{min}}$ is set to a small value of $0.01$. 
% Without priority among these edges, the connectivity force traps robots in a livelock state.
% Note that the color of a connection varies from green to red as the connection becomes weak.}
% \label{fig_r_flip}
% \vspace{-15pt}
% \end{figure}

% \begin{figure}[!t]
% \centering
% \includegraphics[width=.9\linewidth]{image/fig_stuck2.png}
% \caption{}
% \label{fig_stuck_obstacle}
% \end{figure}

% While previous works rely on prior known maps for LoS maintenance, they cannot be directly applied in unknown environments.
This work eliminates the requirements of prior known maps by constructing a real-time egocentric visible region for individual robot based on its onboard LiDAR scans.
The process is adapted from visibility analysis techniques in computer graphics~\cite{katz2007direct}.
As shown in Fig.~\ref{fig_framework}, given a point cloud $\mathcal{C}_{j}\subseteq \mathbb{R}^{3}$ of a robot $j$, its visible region can be constructed with the following four steps~\cite{katz2007direct}:
% The process of , based on the visibility analysis techniques in computer graphics~\cite{katz2007direct}.
% The visible region provides an efficient way to check whether two robots are visible to each other, and more importantly, the distance to the boundary of the opponent's visible region.
% Unlike in a known environment, when robots navigate in an unknown environment, they can only rely on onboard sensor measurements to determine whether their neighbors are within their LoS.
% The process to derive the visible region of a robot $i\in \mathcal{R}$ based on its current point cloud measurement $\mathcal{H}_{i}$ includes following steps~\cite{katz2007direct, liu_StarconvexConstrained_2022, xia_RELINKRealTime_2023}:
1) Augmentation. Add augmented points to fill the gaps in the point cloud $\mathcal{C}_j$;
2) Spherical flipping. Flip each point $\q\in \mathcal{C}_j$ to a point $\q' = f(\q)$ located outside a sphere with a predefined radius $r_{\text{flip}}$, where $r_{\text{flip}}>\Vert \q \Vert$ and $f(\cdot)$ is the spherical flipping function defined as 
\begin{equation}
    f:\q \xrightarrow{} 2r_{\text{flip}}\cdot \frac{\q}{\|\q\|} - \q.
\end{equation}
The flipped point cloud is denoted as $\mathcal{C}_{j}'$;
% following the equation $\q' = 2r_{\text{flip}}\cdot \frac{\q}{\|\q\|} - \q$. 
3) Convex hull construction. Generate a convex hull of $\mathcal{C}_{j}'$, denoted as $\mathcal{H}_{j}$;
% 3) Convex hull construction. Generate a convex hull of $\mathcal{C}_{j}'$, denoted as $\mathcal{H}_{j} = \{\q\in \mathbb{R}^{3} \vert \boldsymbol{A}\q \preceq \boldsymbol{b}\}$, where $\boldsymbol{A} = [\begin{matrix}
%     \n_{1}, & \cdots ,& \n_{K}
% \end{matrix}]^{\top} \in \mathbb{R}^{K\times 3}$ with 
% $\n_k$ being the outward normal vector of the $k$-th face of $\mathcal{H}_{j}$; 
% and $\boldsymbol{b} = [\begin{matrix}
%     \n_{1}^{\top}\boldsymbol{a}_1, & \cdots, & \n_{K}^{\top}\boldsymbol{a}_K
% \end{matrix}]\in \mathbb{R}^{K}$, with
% $\boldsymbol{a}_k\in \mathbb{R}^{3}$ being a vertex point on the $k$-th face. 
4) Inversion. Inverse the boundary of $\mathcal{H}_{j}$ by performing spherical flipping again. 
The inversed boundary encloses the visible region of robot $j$, denoted as $\mathcal{W}_j$.
By construction, a larger $r_{\text{flip}}$ determines a larger visible region~\cite{katz2007direct,xia_RELINKRealTime_2023}.

% \begin{enumerate}
%     \item Augmentation. Add augmented points to fill the gaps in the point cloud $\mathcal{H}_i$;
%     \item Spherical flipping. Map each point $\q\in \mathcal{H}_i$ along the ray originating from the robot $i$ to a point $\q'$ outside a sphere with radius $r$, following the equation $\q' = 2r\cdot \frac{\q}{\|\q\|} - \q$. The flipped point cloud is denoted as $\mathcal{H}_{i}'$;
%     \item Convex hull construction. Generate convex hull of $\mathcal{H}_{i}'$, defined as $\mathcal{H}(\mathcal{H}_{i}') = \{x\in \mathbb{R}^3 \vert Ax \preceq b\}$, where $A = [\n_{1}, ..., \n_{k}]^{\top}\in \mathbb{R}^{K\times 3}$ is composed by normal vectors of all $K$ faces of the convex hull, and $\boldsymbol{b} = [\n_{1}^{\top}\boldsymbol{a}_{1}, ..., \n_{k}^{\top}\boldsymbol{a}_{K}]^{\top}\in \mathbb{R}^{K}$ with $\boldsymbol{a}_k\in \mathbb{R}^{3}$ being an arbitrary point on the face $k\in [1, ..., K]$.
%     \item Inversion. Inverse the convex hull by performing spherical flipping again, and the inversed shape of the convex hull encloses the visible region of robot $i$.
% \end{enumerate}

\begin{definition}[Visible Region]
Given the point cloud $\mathcal{C}_{j}$ of a robot $j$, and its corresponding convex hull $\mathcal{H}_{j}$ with $K$ faces, the visible region is defined as
\[\mathcal{W}_j = \{\q \vert \max_{k\in [1, ..., K]}\{\n_{k}^{\top}(f(\q) - \boldsymbol{a}_k)\} > 0\}\subseteq \mathbb{R}^{3},\]
where $\n_k$ is the outward normal vector of the $k$-th face of $\mathcal{H}_{j}$, and $\boldsymbol{a}_k\in \mathbb{R}^{3}$ is a vertex point on the $k$-th face.
\label{def_visible_region}
\end{definition}

% By construction, a visible region is a star-convex polyhedron, \ie, for a visible region $\mathcal{W}_j$ centered at $\q_j$, $\forall \q\in \mathcal{W}_j$, $\forall \alpha \in [0, 1]$, $\alpha\q + (1-\alpha)(\q - \q_j) \in \mathcal{W}_j$.
% A larger $r_{\text{flip}}$ determines a more aggressive visible region~\cite{katz2007direct,xia_RELINKRealTime_2023}.

\begin{proposition}[Visibility Determination~\cite{katz2007direct}]
    A point $\q\in\mathbb{R}^{3}$ is within the LoS of a robot $j$ if $\q\in\mathcal{W}_j$; or equivalently, a determination metric $d^{\text{hull}} > 0$, where
    $d^{\text{hull}} = \max\{d_k = \n_{k}^{\top}(f(\q) - \boldsymbol{a}_k) \vert k = 1, ..., K\}$ for $\mathcal{H}_j$ with $K$ faces.
\label{prop_visible_determiniation}
\end{proposition}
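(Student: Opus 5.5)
The claim has two parts: the equivalence of $\q\in\mathcal{W}_j$ with $d^{\text{hull}}>0$, and the fact that points of $\mathcal{W}_j$ are within the LoS of robot $j$. I will treat them separately. I work in robot $j$'s egocentric frame, so $\q_j$ is the origin and $f$ is the spherical flip about it.

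\textbf{Equivalence.} This part is definitional. By Definition~\ref{def_visible_region}, $\q\in\mathcal{W}_j$ exactly when $\max_k \n_k^{\top}(f(\q)-\boldsymbol{a}_k)>0$, and that quantity is $d^{\text{hull}}$. It is still worth recording the geometric meaning. The convex hull has the half-space description $\mathcal{H}_j=\{\boldsymbol{x} \mid \n_k^{\top}(\boldsymbol{x}-\boldsymbol{a}_k)\le 0,\ \forall k\}$. Hence $d^{\text{hull}}>0$ says precisely that $f(\q)\notin\mathcal{H}_j$: the flipped point lies strictly outside the hull of the flipped cloud.

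\textbf{Visibility.} Here I follow the hidden-point-removal argument of~\cite{katz2007direct}, in four steps.

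\begin{enumerate}
\item I record the properties of $f$ on the open ball of radius $r_{\text{flip}}$. It maps each ray from the origin to itself. Writing $\rho=\|\q\|$, the image has norm $2r_{\text{flip}}-\rho$, which is strictly decreasing in $\rho$. So $f$ is an involution that reverses radial order along each ray.

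\item I take a point $\q$ and the segment $[\mathbf{0},\q]$, and suppose some obstacle point $\boldsymbol{p}\in\mathcal{C}_j$ blocks it. After the augmentation step, blocking means the LiDAR return in direction $\q/\|\q\|$ has $\|\boldsymbol{p}\|<\|\q\|$, either exactly on that ray or on nearby rays surrounding it. By step~1, $f(\boldsymbol{p})$ then lies farther out than $f(\q)$ along essentially the same direction.

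\item Since $\mathbf{0}$ lies in the interior of $\mathcal{H}_j$ (the flipped points surround the origin), convexity puts the whole segment $[\mathbf{0},f(\boldsymbol{p})]$ inside $\mathcal{H}_j$. In particular $f(\q)\in\mathcal{H}_j$, so $d^{\text{hull}}\le 0$.

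\item Taking the contrapositive: if $d^{\text{hull}}>0$, no sensed obstacle lies on the ray before $\q$, and $\q$ is in LoS.
\end{enumerate}

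\textbf{Main obstacle.} The difficult step is step~2 when the blocking return does not lie exactly on the ray through $\q$. With discrete scans and augmentation, $f(\q)$ must be shown to lie inside the convex hull of the flipped points on neighbouring rays, $\mathrm{conv}\{\mathbf{0},f(\boldsymbol{p}_1),\dots,f(\boldsymbol{p}_m)\}$, where the $\boldsymbol{p}_i$ surround the direction of $\q$. My first attempt would express the direction of $\q$ as a positive combination of the directions of the surrounding returns, then use the radial monotonicity of $f$ to bound the norm of $f(\q)$ by that of the corresponding point on the cone they span. This works when the angular gaps are small relative to $r_{\text{flip}}$. Because of this, I expect the statement can only be established under a sufficient-density assumption on the augmented cloud, which is the regime in which~\cite{katz2007direct} states it; I would cite that result for the general case.
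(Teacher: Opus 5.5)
The paper gives no proof of this proposition. The equivalence $\q\in\mathcal{W}_j \iff d^{\text{hull}}>0$ is Definition~\ref{def_visible_region} restated, and the LoS implication is taken as a premise on citation of~\cite{katz2007direct}. Your equivalence paragraph matches this.

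Your Steps 1--3 go further than the paper and are sound as a \emph{ray-wise} statement. If $\mathbf{0}\in\mathcal{H}_j$, $0<\|\q\|<2r_{\text{flip}}$, and some $\p\in\mathcal{C}_j$ on the ray through $\q$ has $\|\p\|\le\|\q\|$, then $f(\q)\in[\mathbf{0},f(\p)]\subseteq\mathcal{H}_j$, so $d^{\text{hull}}\le 0$. Two points need to be made explicit.

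\textbf{The bound on $\|\q\|$ belongs in the claim.} Definition~\ref{def_visible_region} applies $f$ to every $\q$. Every point of $\mathcal{H}_j$ has norm at most $2r_{\text{flip}}-\rho_{\min}$, where $\rho_{\min}=\min_{\p\in\mathcal{C}_j}\|\p\|$. So any $\q$ with $\|\q\|>4r_{\text{flip}}-\rho_{\min}$ has $f(\q)$ pointing antipodally and lying outside $\mathcal{H}_j$. Such a $\q$ is in $\mathcal{W}_j$ regardless of what lies in front of it, so your Step~4 is false there.

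\textbf{The continuous case is already done.} For an idealized scan with a first-hit return in \emph{every} direction, Steps 1--3 are a complete proof, and there is no off-ray case.

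\textbf{The off-ray case is where the proposal fails.} For a discrete scan this is not a missing lemma that density or a citation can supply: the implication is false.
\begin{itemize}
\item \emph{Thin obstacles.} By the same norm bound, $\mathcal{W}_j$ contains the whole punctured disk $0<\|\q\|<\rho_{\min}$. A thin obstacle inside it that falls between two adjacent beams produces no return, leaves $\mathcal{W}_j$ unchanged, and occludes points of $\mathcal{W}_j$ just behind it.
\item \emph{Obstacles that are hit.} Take an occluding edge where beam~1 hits a near surface at depth $\ell$ and the adjacent beam~2 hits a far surface at depth $L>\ell$, with the corner strictly between them. If $[f(\p_1),f(\p_2)]$ is an edge of $\mathcal{H}_j$, as is typical, its back-flip interpolates the boundary depth continuously from $\ell$ to $L$ across the gap. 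So a sliver just behind the corner lies in $\mathcal{W}_j$ yet is occluded. When the near surface faces the sensor, this sliver is nonempty for every positive beam spacing; it only shrinks as the spacing does.
\end{itemize}

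This is exactly where your plan to place $f(\q)$ in $\mathrm{conv}\{\mathbf{0},f(\p_1),\dots,f(\p_m)\}$ breaks. It works only when \emph{all} surrounding returns are nearer than $\q$, by a small margin depending on $r_{\text{flip}}$ and the spacing. At an occluding edge, one of those returns lies on the far surface. So a ``sufficient density'' hypothesis cannot rescue it.

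Citing~\cite{katz2007direct} does not close the gap either. That analysis concerns which points of the sampled cloud are marked visible, not whether segments from the viewpoint to arbitrary points of space are obstacle-free.

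The defensible version of the proposition is your ray-wise statement, restricted to $0<\|\q\|<2r_{\text{flip}}$, plus an explicit sensing assumption. That assumption is either a return in every relevant direction, or that the inter-beam interpolation of $\partial\mathcal{W}_j$ does not cut through obstacles. This is effectively how the paper uses the proposition.
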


% Whether a point $\q\in \mathbb{R}^{3}$ belongs to $\mathcal{W}_{j}$ is determined by a determination metric $d^{\text{hull}} = \max_{k}\{d_k = \n_{k}^{\top}(\q' - \boldsymbol{a}_k) \vert k = 1, ..., K\}$,
% where the index $k^{*} = \arg \max_{k\in [1,...,K]} d_{k}$.
% As shown in Fig.~\ref{fig_visibility_illustration}(a), whether a point $\q$ belongs to $\mathcal{W}_{j}$ is determined by the distance 
% $d^{\text{hull}} = \max_{k}\{d_k = \n_{k}^{\top}(\q' - \boldsymbol{a}_k) \vert k = 1, ..., K\}$,
% where the index $k^{*} = \arg \max_{k\in [1,...,K]} d_{k}$.
% If $d^{\text{hull}} > 0$, $\q$ is visible from $\q_j$, \ie, they are within line-of-sight with each other; vice verse if $d^{\text{hull}} \le 0$. 
% Note we denote $d^{\text{hull}}_{ji}$ as the determination metric of a robot $i$ to robot $j$'s $\mathcal{H}_{j}$.
A differentiable approximation of the determination metric $d^{\text{hull}}$ can be defined with the log-sum-exp relaxation as
\begin{equation}
    d^{\text{hull}} \approx \frac{1}{\alpha} \log \left(e^{\alpha d_{1}}+\cdots+e^{\alpha d_{K}}\right),
\label{eq_dx}
\end{equation}
where $\alpha > 0$ is a coefficient that controls the degree of approximation~\cite{liu_StarconvexConstrained_2022}.

\section{Problem Formulation}

\subsection{Robot Model}

Assume there are a set of robots $\mathcal{R} = \{1, 2, ..., R\}$ navigating in an initially unknown environment.
The position of a robot $i\in \mathcal{R}$ at time $t$ is denoted as $\q_{i}(t)\in \mathbb{R}^{3}$. 
Its kinematic model is given by
\begin{equation}
    \q_{i}(t+1) = \q_{i}(t) + \boldsymbol{u}_{i}(t),
\label{eq_kinematic_model}
\end{equation}
where $\boldsymbol{u}_{i}(t) = k^{\text{c}}_{i}\cdot \boldsymbol{u}^{\text{c}}_{i}(t) + k^{\text{n}}_{i}\cdot \boldsymbol{u}^{\text{n}}_{i}(t)$; 
$\boldsymbol{u}^{\text{c}}_{i}, \boldsymbol{u}^{\text{n}}_{i}\in \mathbb{R}^{2,3}$ are the velocity commands for connectivity and navigation, respectively; 
% \begin{equation}
%     \q_{i}(t+1) = \q_{i}(t) + k^{\text{c}}_{i}\cdot \boldsymbol{u}^{\text{c}}_{i}(t) + k^{\text{n}}_{i}\cdot \boldsymbol{u}^{\text{n}}_{i}(t),
% \label{eq_kinematic_model}
% \end{equation}
% where $\boldsymbol{u}^{\text{c}}_{i}, \boldsymbol{u}^{\text{n}}_{i}\in \mathbb{R}^{3}$ are the velocity commands for connectivity and navigation, respectively;  
and $k_i^\text{c}, k_i^\text{n}\in \mathbb{R}_{\ge 0}$ are two scaling factors.
Moreover, $\Vert \boldsymbol{u}_{i}(t) \Vert_2\le U_{\text{max}}$, where $U_{\text{max}}$ is the upper limit of robots' velocities.

\subsection{Connectivity Constraints}
\label{sec_types_of_constraints}

Let $\G(t) = \langle \V, \E(t), \omega \rangle$ be an undirected time-varying connectivity graph of robots, where $\V = \{1, ..., R\}$ includes $R$ vertices corresponding to robots in $\mathcal{R}$; $\E(t) \subseteq \V \times \V$ is the edge set at time $t$;
and the weighting function $\omega: \V \times \V \xrightarrow{} \mathbb{R}_{\ge 0}$ evaluates the strength of edge between two vertices in the graph.
An edge $(i, j)$ exists between two robots $i$ and $j$ if and only if the following constraints are satisfied, where $d_{ij}= \lVert \q_{i} - \q_{j} \rVert_{2}$.

\begin{itemize}
    \item (C1) Communication range. The distance $d_{ij}$ must be within a communication range $d^{\text{com}}_{\text{max}}$, \ie, $d_{ij}\le d^{\text{com}}_{\text{max}}$.
    \item (C2) Line-of-Sight maintenance. The two robots must be within each other's line-of-sight, \ie, $\eta \q_{i} + (1-\eta)\q_{j} \notin \mathcal{O}$, $\forall \eta \in [0, 1]$, where $\mathcal{O}\subseteq \mathbb{R}^{3}$ denotes the space occupied by obstacles.
    \item (C3) Collision avoidance. The inter-robot and robot-obstacle distances must be no less than a safe distance $d^{\text{coll}}_{\text{min}}$, \ie, $d_{ij}\ge d^{\text{coll}}_{\text{min}}$; $\min_{\q\in \mathcal{O}}\Vert \q_i - \q\Vert_{2} \ge d^{\text{coll}}_{\text{min}}$.
\end{itemize}
We define $\omega(\cdot)$ as $\omega(i, j) =A_{ij} = \alpha_{ij}\cdot \beta_{ij} \cdot \gamma_{ij}$, where $\alpha(\cdot), \beta(\cdot), \gamma(\cdot)$ are three potential functions to quantify the satisfaction of constraints C1, C2, and C3 between robots, respectively.
As the functions $\alpha(\cdot)$ and $\gamma(\cdot)$ only depend on robots' relative distances regardless of the environment, we define them following~\cite{bai2025realmrealtimelineofsightmaintenance}.
We instead focus on formulating $\beta(\cdot)$ in this paper.
% The resulting graph $\G$ is referred to as the \emph{connectivity graph} of the robot team. 
Initially, we assume $\G(0)$ is connected.

% We define the three potential functions 
% $\alpha(\cdot), \beta(\cdot), \gamma(\cdot)$ as in previous work~\cite{bai2025realmrealtimelineofsightmaintenance} to quantify the satisfaction of constraints C1, C2, and C3 between two robots $i$ and $j$, respectively.
% Based on the concept of generalized graph connectivity, we define $\omega(i, j) = A_{ij} = \alpha_{ij}\cdot \beta_{ij} \cdot \gamma_{ij}$ for an edge $(i, j)\in \mathcal{E}$ in $\mathcal{G}$.

% We define a set $\N_{i} = \{j\in \V \vert (i, j)\in \E\}$ as the collection of all neighboring robots of robot $i$ that satisfy above constraints.

\subsection{Problem Statement}
\label{eq_problem_statement}

Given a 
sequence of target points $\mathcal{Z} = \{\boldsymbol{z}^{1}, ..., \boldsymbol{z}^{M}\}\subseteq \mathbb{R}^{3}$ distributed in the free space of an unknown environment, and a group of robots $\mathcal{R} = \{1, ..., R\}$ with kinematic models defined in Eq.~(\ref{eq_kinematic_model}).
We assume $M \le R$, and a target $\boldsymbol{z}^m\in \mathcal{Z}$ is assigned to a robot $i^m\in \mathcal{R}$.
$\forall t\ge 0$, the problem is to find a sequence of velocity commands $\boldsymbol{u}_{i}(t)$ for each robot $i\in \mathcal{R}$ so that:
(1) there exists a time sequence $0\le t^1, ..., t^M < \infty$ when a target $\boldsymbol{z}^m$ is visited by the assigned robot $i^m$ at time $t^m$;
(2) the connectivity graph $\G(t)$ is always connected; 
(3) the time required to visit all targets is minimized.

% connectivity maintenance minimally interferes with robots' navigation efficiency.

The key challenge lies in reliably maintaining LoS connectivity between robots in unknown environments while minimizing the impact on their external navigation tasks, which is discussed in the following sections.

\section{Methodology}

This section presents our approach for multi-robot navigation with LoS connectivity maintenance.
The entire framework is shown in Fig.~\ref
{fig_framework}.
We first introduce a new metric for LoS-distance evaluation based on the polygonal visible region, and then propose a topology optimization approach to improve navigation efficiency.
% We first introduce an efficient and reliable LoS-distance metric based on the polygonal approximation of a robot's visible region, supporting robust LoS connectivity maintenance.
% The new metric significantly improves robustness of LoS maintenance, which then supports
% With such capability, we further propose an additional topology optimization module that masks out redundant connections from the graph Laplacian matrix, therefore facilitating robots' navigation efficiency while maintaining connectivity.
% Based on that, we propose topology optimization in graph Laplacian-based connectivity controllers to allow a more dispersed formation of robots under connectivity constraints.
We focus on the 2D cases for conciseness while the formulations also apply to 3D cases.
Unless specified otherwise, we assume all variables have been transformed into the local frame of an example robot $j\in \mathcal{R}$ located at $\q_j$, with $\q_j = [0~0]^{\top}$.

\subsection{Exact LoS-Distance Evaluation}
\label{sec_visible_region_and_metrics}

\begin{figure}[!t]
\vspace{6pt} 
\centering
\includegraphics[width=.9\linewidth]{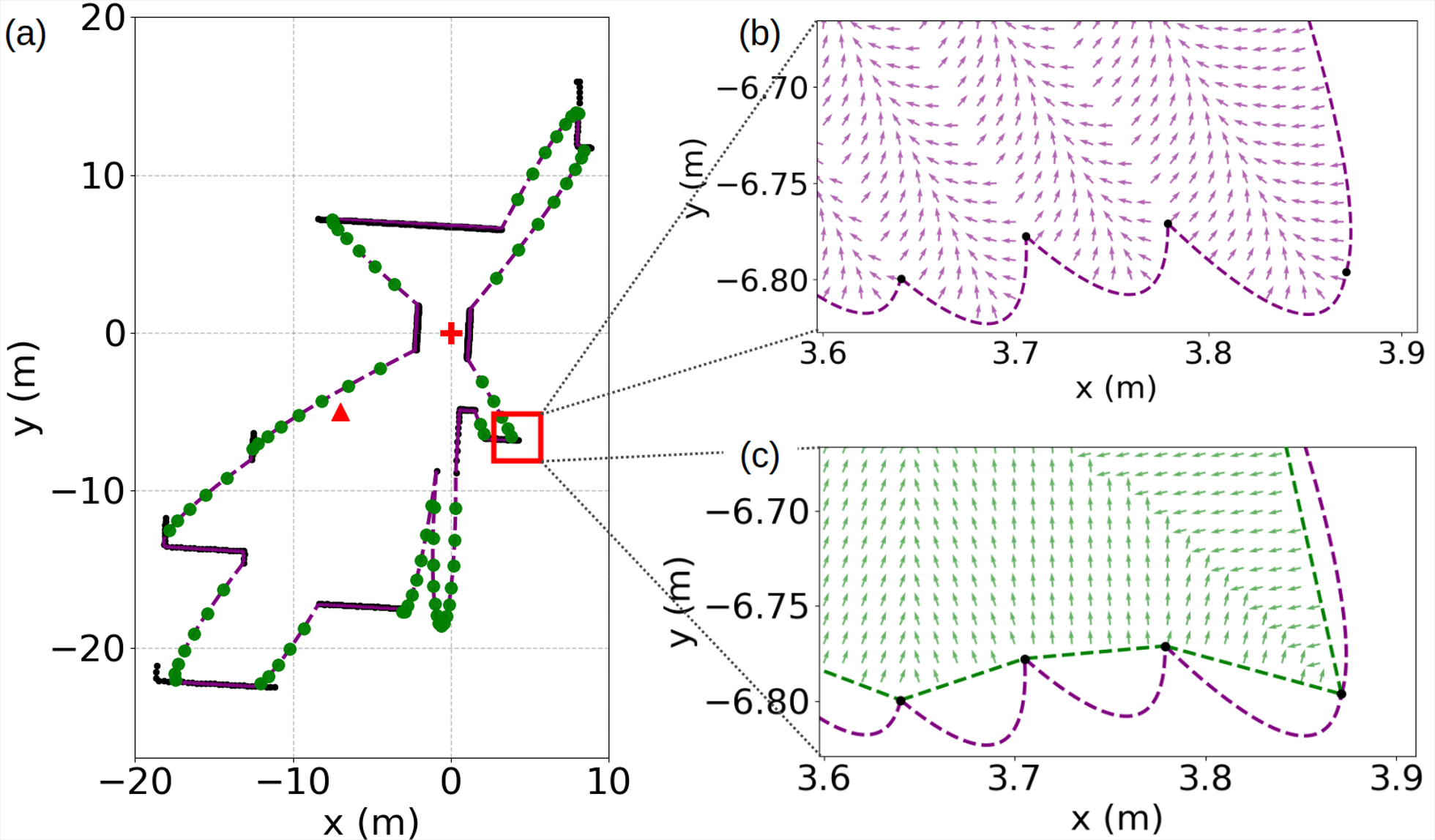}
\vspace{-8pt} 
\caption{(a): A robot (denoted by $+$) with its visible region and polygonal approximation, whose boundaries are highlighted by purple and green dotted lines in (b) and (c), respectively. 
% The green dots are the additionally interpolated points with $\Delta \theta = 0.5^{\circ}$.
The radial angle-based interpolation adapts to the boundary geometry by placing denser points (green dots in (a)) along boundaries of high curvature and sparser points along flatter boundaries.
% Polygonal approximation of a robot's (denoted by $+$) visible region (enclosed by purple dotted lines) from a frame of 2D LiDAR scan. 
(b) and (c) also show the gradient field of $d^{\text{hull}}$ (purple arrows) and $\tilde{d}^{\text{los}}$ (green arrows) respectively, in a zoomed-in area. 
A probe point (denoted by $\blacktriangle$) in (a) has the exact $d^{\text{los}} = 1.08m$ and $\tilde{d}^{\text{los}} = 1.07m$, while $d^{\text{hull}} = 12.50m$ showing significant metric error.
Moreover, the gradient of $d^{\text{hull}}$ tends to trap a robot within a small radial sector.}
\label{fig_linear_approx}
\vspace{-10pt}
\end{figure}

To efficiently quantify the distance to the visible region, previous works directly use the determination metric $d^{\text{hull}}$ in Eq.~(\ref{eq_dx}) or its variations as metrics~\cite{liu_StarconvexConstrained_2022, xia_RELINKRealTime_2023, bai2025realmrealtimelineofsightmaintenance}.
% Specifically, given a LiDAR scan $\mathcal{C}_{j}$ of a robot $j$ and a testing point $\q$, if the metric $d^{\text{hull}}$ of $f(\q)$ is large (small), it indicates that $\q$ is away from (close to) the boundary of $\mathcal{W}_j$.
% However, we find that $d^{\text{hull}}$ cannot provide accurate distance evaluation, and the gradient derived from $d^{\text{hull}}$ can lead to confined movement of robots.
% In this paper, we define the line-of-sight distance based on the visible region $\mathcal{W}_{j}$ as follows:
However, in this paper, we define the line-of-sight distance based on the visible region $\mathcal{W}_{j}$ as follows:

\begin{definition}[Line-of-Sight-Distance]
Given a robot $i$ located at $\q_{i}\in \mathcal{W}_j$, the LoS-distance of a robot $i$ to the visible region $\mathcal{W}_j$ of a robot $j$ is defined as
\[d^{\text{los}}_{ji} = \min\{\Vert \q_{i} - \p \Vert_{2} ~\vert~ \p\in \partial\mathcal{W}_{j}\},\]
where $\partial\mathcal{W}_{j}$ denotes the boundary of $\mathcal{W}_{j}$.
\label{def_los_distance}
\end{definition}
% \begin{remark}
%     The determination metric $d^{\text{hull}}$ captures the distance to a convex hull $\mathcal{H}_{j}$ (\ie, the projection of the actual visible region in the flipped space), while Def.~\ref{def_los_distance} directly evaluate the distance to the boundary visible region $\mathcal{W}_j$ in the original space.
% \end{remark}
% \begin{figure}[!t]
% \centering
% \includegraphics[width=.5\linewidth]{image/fig_show_sensitivity2.png}
% \caption{The illustration of the sensitivity of losing LoS in 2D case.}
% \label{fig_sensitivity2}
% \end{figure}

We abandon the previous metric $d^{\text{hull}}$ or its derivatives based on our observation and analysis that, it only captures the \emph{radial} distance to the boundary of a visible region while omitting the tangent distance, which leads to significant evaluation error.
Moreover, the gradient of $d^{\text{hull}}$ constrains the robot within a narrow sector with a small radial angle.
Consequently, a robot with only a small lateral clearance to the boundary can neither recognize such a situation nor escape from the critical region, as illustrated in Fig.~\ref{fig_linear_approx}.

% Consequently, a robot near the boundary can neither recognize such a situation nor escape from the critical region, as illustrated in Fig.~\ref{fig_linear_approx}.

In contrast, the new metric $d^{\text{los}}_{ji}$ evaluates the exact minimum distance from a robot $i$ to the boundary of robot $j$' visible region.
However, solving the accurate $d^{\text{los}}_{ji}$ as in Def.~\ref{def_los_distance} is a non-linear optimization problem that does not have an analytical solution (details are provided in the Appendix-A).
% Moreover, its high time complexity prevents it from real-time applications, as compared in Tab.~\ref{tab_approx_los_dist}.
Therefore, we resort to an efficient and safe approximation of the exact $d^{\text{los}}_{ji}$ that has a differentiable expression and supports real-time evaluation, introduced in the next subsection.

\subsection{Visible Region Approximation}

To support efficient LoS-distance evaluation, we use a 2D polygon (or polyhedron in 3D) to approximate the irregular boundaries of visible regions, as shown in Fig.~\ref{fig_framework} and Fig.~\ref{fig_linear_approx}.
Specifically, 
given the flipped convex hull $\mathcal{H}_{j}$ of a robot $j$, we first apply radial angle-based interpolation for each edge of $\mathcal{H}_{j}$, where the $k$-th edge will be interpolated with additional vertices if the radial angle $\theta_{k}$ spanned by this edge w.r.t. $\q_j$ exceeds a predefined threshold $\Delta \theta$.
% \footnote{The motivation behind the radial angle-based interpolation is that, if a longer edge of $\mathcal{H}_j$ covers a small radial angle, it is almost parallel to a robot's line-of-sight and should be well-approximated with a line segment. 
% We found that such interpolation adapts to the boundary geometry by placing denser points along boundaries of high curvature and sparser points along flatter boundaries, as shown in Fig.~\ref{fig_linear_approx}(a).}.
% interpolate each edge of the convex hull, \eg, the $k$-th edge, with 
After this, any two consecutive vertices on the boundary of $\mathcal{H}_j$ will span a radial angle not exceeding $\Delta \theta$.
Then, all vertices of $\mathcal{H}_{j}$, denoted as $\mathcal{V}_{\text{hull}}$, are flipped back to the original space following the flipping operation $f(\cdot)$. 
These vertices are sequentially connected with line segments to finally enclose an \emph{approximated visible region} $\tilde{\mathcal{W}}_j$ to approximate $\mathcal{W}_j$.
The detailed process is summarized in Alg.~\ref{alg_approx_visible}.

\begin{proposition}
    The time complexity of constructing the approximated visible region $\tilde{\mathcal{W}}_j$ is $\mathcal{O}(nlog(n)+m)$, where $n=\vert \mathcal{C}_j\vert$ and $m = \frac{2\pi}{\Delta \theta}$.
\end{proposition}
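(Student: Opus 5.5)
The plan is to follow the construction step by step, as in Alg.~\ref{alg_approx_visible}, bound the cost of each stage, and add the bounds.

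\textbf{Linear stages.} The augmentation step produces $\mathcal{O}(n)$ points. I will assume, as is standard for this step, that it adds at most a constant number of points per gap between angularly consecutive scan points. The spherical flipping $f(\cdot)$ is a constant-time operation per point, so flipping $\mathcal{C}_j$ into $\mathcal{C}_j'$ costs $\mathcal{O}(n)$.

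\textbf{Convex hull.} Next, the convex hull $\mathcal{H}_j$ of $\mathcal{C}_j'$ is computed with an optimal planar algorithm such as Graham scan or Andrew's monotone chain. This costs $\mathcal{O}(n\log n)$ and returns at most $n' \le \mathcal{O}(n)$ vertices in angular order. In 3D, Quickhull or a divide-and-conquer hull gives the same $\mathcal{O}(n\log n)$ bound in the expected or worst case.

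\textbf{Interpolation.} Let edge $k$ span radial angle $\theta_k$. It receives $\lceil \theta_k/\Delta\theta\rceil - 1 \le \theta_k/\Delta\theta$ extra vertices, each computed in $\mathcal{O}(1)$ by intersecting the edge with a ray from $\q_j$. The key fact is that the hull is star-shaped about $\q_j$: by the flipping construction, $\q_j$ lies in its interior. The edges therefore partition the full angle around $\q_j$ without overlap, so $\sum_k \theta_k = 2\pi$. Summing over all edges, the interpolation costs
\[
\sum_k \mathcal{O}\!\left(1 + \frac{\theta_k}{\Delta\theta}\right) = \mathcal{O}(n' + m), \qquad m = \frac{2\pi}{\Delta\theta}.
\]
The vertex set $\mathcal{V}_{\text{hull}}$ then has $\mathcal{O}(n+m)$ elements.

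\textbf{Flip-back and connection.} Flipping these vertices back with $f(\cdot)$ and connecting them in their existing angular order to form $\tilde{\mathcal{W}}_j$ are both linear in $|\mathcal{V}_{\text{hull}}|$. No re-sorting is needed, because flipping preserves each point's direction from $\q_j$.

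\textbf{Total.} Adding the stages gives $\mathcal{O}(n) + \mathcal{O}(n\log n) + \mathcal{O}(n+m) = \mathcal{O}(n\log n + m)$.

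\textbf{Main obstacle.} The delicate step is the interpolation bound. A naive count charges each of the $n'$ edges up to $m$ extra points, giving $\mathcal{O}(nm)$. The argument must instead use $\sum_k\theta_k = 2\pi$, which depends on $\q_j$ lying strictly inside $\mathcal{H}_j$. I will justify this from the flipping construction: every flipped point lies outside the sphere of radius $r_{\text{flip}}$ centred at $\q_j$, and $\q_j$ can be included in the hull, as in~\cite{katz2007direct}.

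For the 3D case, the analogous statement is that the faces' solid angles sum to $4\pi$. There $m$ should be read as the number of angular cells of size $\Delta\theta$, and I would note that adjustment rather than prove it in full.
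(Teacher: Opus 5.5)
The paper states this proposition without proof; nothing in the appendix addresses it. The only implicit argument is the structure of Algorithm~\ref{alg_approx_visible}: an $\mathcal{O}(n\log n)$ hull followed by per-edge interpolation. Your stage-by-stage accounting is correct and is the natural way to make that rigorous. The essential point, which the paper never spells out, is that the interpolated vertices must be counted globally through $\sum_k\theta_k/\Delta\theta$ rather than per edge. That is what gives $+m$ instead of $\mathcal{O}(nm)$.

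One justification should be tightened. The fact that all flipped points lie outside the sphere of radius $r_{\text{flip}}$ does not by itself place $\q_j$ inside $\mathcal{H}_j$, since the points could all lie in a half-plane. Adding $\q_j$ to the point set as in~\cite{katz2007direct} only helps if $\q_j$ is already interior. Otherwise $\q_j$ becomes a hull vertex, where $\theta_k$ is not even defined for the incident edges. Interiority actually comes from the $360^\circ$ scan coverage after augmentation.

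You do not need interiority at all, though:
\begin{itemize}
\item If $\q_j$ is strictly inside the convex polygon $\mathcal{H}_j$, each ray from $\q_j$ crosses the boundary exactly once, so $\sum_k\theta_k=2\pi$.
\item If $\q_j$ is outside, only rays in a cone of angle less than $\pi$ meet the boundary, each at most twice, so $\sum_k\theta_k<2\pi$.
\end{itemize}
Either way the bound holds, and your ``main obstacle'' disappears.

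Two smaller remarks. Your $\lceil\theta_k/\Delta\theta\rceil-1$ differs from the algorithm's $\lfloor\theta_k/\Delta\theta\rfloor-1$, but both are at most $\theta_k/\Delta\theta$, so the count is unaffected. Your caveat that $m=2\pi/\Delta\theta$ is intrinsically two-dimensional is appropriate, since in 3D the number of angular cells scales like $\Delta\theta^{-2}$.
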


The approximated visible region $\tilde{\mathcal{W}}_j$ has several advantages. 
First, the analytical expression of the LoS-distance to $\tilde{\mathcal{W}}_j$ can be derived.
Specifically, the distance of a robot $i$ located at $\q_{i}$ to the $k$-th line segment $\overline{\boldsymbol{a}_{k}\boldsymbol{a}_{k+1}}$ of $\tilde{\mathcal{W}}_j$ can be calculated as
\begin{equation}
    \tilde{d}_{k} = 
    \begin{cases} 
        \Vert \q_{i}\boldsymbol{a}_{k} \Vert, & \frac{(\q_{i}\boldsymbol{a}_{k})^{\top}(\boldsymbol{a}_{k+1}\boldsymbol{a}_{k})}{\Vert \boldsymbol{a}_{k+1}\boldsymbol{a}_{k}\Vert} < 0, \\
        \Vert \q_{i}\boldsymbol{a}_{k+1} \Vert, & \frac{(\q_{i}\boldsymbol{a}_{k})^{\top}(\boldsymbol{a}_{k+1}\boldsymbol{a}_{k})}{\Vert \boldsymbol{a}_{k+1}\boldsymbol{a}_{k}\Vert} > 1, \\
        \frac{\Vert \q_{i}\boldsymbol{a}_{k+1} \times \q_{i}\boldsymbol{a}_{k+1} \Vert}{\Vert \boldsymbol{a}_{k}\boldsymbol{a}_{k+1} \Vert}, & \text{otherwise}.
    \end{cases}
\label{def_dk}
\end{equation}
The derivative of $\tilde{d}_{k}$ w.r.t $\q_i$ is provided in Appendix-B.
We define the \emph{approximated} LoS-distance  $\tilde{d}^{\text{los}}_{ji}$ as the distance from robot $i$ to the closest boundary of $\tilde{\mathcal{W}}_j$, calculated as
\begin{equation}
    \tilde{d}^{\text{los}}_{ji} = \min_{k} \tilde{d}_{k}.
\end{equation}

Since the mutual LoS-distance between two robot $i$ and $j$ can be different (or imbalanced) as mentioned in~\cite{bai2025realmrealtimelineofsightmaintenance}, \ie, $\tilde{d}^{\text{los}}_{ji}\ne \tilde{d}^{\text{los}}_{ij}$, we take the smaller value between them as the final LoS-distance to ensure conservative evaluation, defined as $D_{ij} = \min\{\tilde{d}^{\text{los}}_{ji}, \tilde{d}^{\text{los}}_{ij}\}$.
The potential function $\beta(\cdot)$ for LoS-connectivity constraints (C2) in Sec.~\ref{sec_types_of_constraints} is then defined as~\cite{bai2025realmrealtimelineofsightmaintenance}
\begin{equation}  
\small
\beta_{ij} =\left \{
    \begin{aligned}
        &0, &0\le D_{ij} < d^{\text{los}}_{\text{min}},\\
        &\frac{k_{\beta}}{2}[1-\cos(\frac{D_{ij} - d^{\text{los}}_{\text{min}}}{d^{\text{los}}_{\text{max}} - d^{\text{los}}_{\text{min}}})\pi], &d^{\text{los}}_{\text{min}}\le D_{ij} < d^{\text{los}}_{\text{max}},\\
        &k_{\beta}, &D_{ij} \ge d^{\text{los}}_{\text{max}},
    \end{aligned}
    \right.
\label{eq_def_beta}
\end{equation}
where $d^{\text{los}}_{\text{max}} >0$ is the trigger distance at which the LoS constraints take effect until robots lose LoS at $D_{ij} < d^{\text{los}}_{\text{min}}$, with $d^{\text{los}}_{\text{min}}$ being a small positive margin; $k_{\beta}>0$ is a scalar weight set to 1. 
It holds that $\beta_{ij} = \beta_{ji} = \beta(D_{ij})$.

The gradient direction for robot $i$ to enhance LoS-connectivity (\ie, increase $\beta_{ij}$) is given by
\begin{equation}
\small
    \frac{\partial \beta_{ij}}{\partial \q_i} = 
    \frac{\partial \beta(\tilde{d}^{\text{los}}_{ji})}{\partial \tilde{d}^{\text{los}}_{ji}} \vrule_{\tilde{d}^{\text{los}}_{ji} = D_{ij}} 
    \cdot
    \frac{\partial \tilde{d}^{\text{los}}_{ji}}{\partial \q_i}
\label{eq_gradient_visible_region}
\end{equation}
However, this gradient only pushes robot $i$ away from the boundary of $\tilde{\mathcal{W}}_{j}$.
When robot $i$ is free (\ie, no external target) and robot $j$ is at risk of losing LoS, we further expect robot $i$ to \emph{actively} move toward robot $j$ as a relay robot for connectivity maintenance.
Therefore, we reshape the gradient field by adding an additional radial component directed toward neighboring robots as follows:
\begin{equation}
\small
    % \frac{\partial d}{\partial \q_{i}} = \frac{\partial \tilde{d}^{\text{los}}_{ji}}{\partial \q_{i}} + \beta_{ji} \frac{-\q_{i}}{\Vert \q_{i}\Vert}.
    \frac{\partial \beta_{ij}}{\partial \q_i} \xleftarrow{} 
    \frac{\partial \beta(\tilde{d}^{\text{los}}_{ji})}{\partial \tilde{d}^{\text{los}}_{ji}} \vrule_{\tilde{d}^{\text{los}}_{ji} = D_{ij}} 
    \cdot
    \left(\frac{\partial \tilde{d}^{\text{los}}_{ji}}{\partial \q_i}
    +
     \beta(\tilde{d}^{\text{los}}_{ji}) \cdot \frac{-\q_{i}}{\Vert \q_{i}\Vert}
    \right).
\label{eq_local_gradient}
\end{equation}
When $\beta(\tilde{d}^{\text{los}}_{ji}) < k_{\beta}$, robot $i$ is mainly motivated to move away from the boundary of $\tilde{\mathcal{W}}_j$ following Eq.~(\ref{eq_gradient_visible_region}); and when $\beta(\tilde{d}^{\text{los}}_{ji}) = k_{\beta}$, robot $i$ will be further directed to chase robot $j$ if $D_{ij}$ is critical.
Note that the gradient in Eq.~(\ref{eq_local_gradient}) is defined in robot $j$'s local frame, and should be transformed into a global frame before being applied to robot $i$.

% The gradient of the approximated LoS-distance can push a robot be away from the closest boundary of the neighboring robot's visible region, thereby maintaining LoS-connectivity.

Second, we have the following propositions that guarantee $\tilde{\mathcal{W}}_j$ is a \emph{safe} approximation of $\mathcal{W}_j$, in the sense that the LoS-distance will never be overestimated.

\SetAlgoSkip{medskip}
\begin{algorithm}[t]
\label{alg_approx_visible}
\small
\SetKwInOut{Input}{Input}\SetKwInOut{Output}{Output}
\SetKwInOut{Return}{Return}
% \SetAlgoLined 控制算法控制语句是否以end结尾
\caption{ApproxVisibleRegion($\mathcal{H}_{j}$, $\Delta \theta$)}
$\mathcal{V}_{\text{hull}} \xleftarrow{} [~]$.\\
\For{$k\in [1, 2, ..., K]$}{
    $\langle \boldsymbol{a}_{k}, \boldsymbol{a}_{k+1} \rangle \xleftarrow{}$ the $k$-th edge of $\mathcal{H}_j$.\\
    Add $\boldsymbol{a}_{k}$ to $\mathcal{V}_{\text{hull}}$.\\
    $\theta_{k} = \cos^{-1}\frac{\boldsymbol{a}_{k}^{\top} \boldsymbol{a}_{k+1}}{\Vert\boldsymbol{a}_{k} \Vert  \Vert \boldsymbol{a}_{k+1}\Vert}$.\\
    $n_{\text{inter}} = \lfloor\theta_{k}/\Delta\theta\rfloor -1$.\\
    \If{$n_{\text{inter}} > 0$}{
        Intepolate edge $\langle \boldsymbol{a}_{k}, \boldsymbol{a}_{k+1} \rangle$ with $n_{\text{inter}}$ points.\\
        Sequentially add $n_{\text{inter}}$ points to $\mathcal{V}_{\text{hull}}$.
    }   
    Add $\boldsymbol{a}_{k+1}$ to $\mathcal{V}_{\text{hull}}$.\\
}
% $V_{\text{vis}} = f(\mathcal{V}_{\text{hull}})$.\\
% Spherically flip all points in $\mathcal{V}_{\text{hull}}$ back to original space to get $V_{\text{vis}}$.\\
$\tilde{\mathcal{W}}_j\xleftarrow{}$ the region enclosed by connecting consecutive vertices in $f(\mathcal{V}_{\text{hull}})$ with line segments.\\
\KwRet{$\tilde{\mathcal{W}}_j$}\\
\end{algorithm}
% \vspace{-10pt}

\begin{proposition}
The approximated visible region $\tilde{\mathcal{W}}_j$ for a robot $j\in \mathcal{R}$ is a subset of its actual visible region $\mathcal{W}_j$, \ie, $\tilde{\mathcal{W}}_j\subseteq \mathcal{W}_j$.
\label{prop_subset}
\end{proposition}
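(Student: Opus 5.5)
Both regions are star-shaped with respect to the robot position $\q_j=\boldsymbol{0}$, so I would compare them ray by ray. Along each direction I would show that the radial extent of $\tilde{\mathcal{W}}_j$ never exceeds that of $\mathcal{W}_j$.

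\emph{Step 1 (ray description of $\mathcal{W}_j$).} Since $\boldsymbol{0}$ lies in the interior of the flipped hull $\mathcal{H}_j$, each unit direction $\boldsymbol{u}$ meets $\partial\mathcal{H}_j$ at exactly one point $h(\boldsymbol{u})\boldsymbol{u}$. Every point of the form $\rho\boldsymbol{u}$ with $\rho>h(\boldsymbol{u})$ lies outside the hull, i.e.\ some face satisfies $\n_k^\top(\rho\boldsymbol{u}-\boldsymbol{a}_k)>0$.

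The flipping map acts radially: $f(r\boldsymbol{u}) = (2r_{\text{flip}}-r)\boldsymbol{u}$, which is strictly decreasing in $r$ on $(0,r_{\text{flip}})$. By Definition~\ref{def_visible_region} and Proposition~\ref{prop_visible_determiniation}, $r\boldsymbol{u}\in\mathcal{W}_j$ iff $2r_{\text{flip}}-r>h(\boldsymbol{u})$, i.e.
\[
r< R(\boldsymbol{u}) := 2r_{\text{flip}}-h(\boldsymbol{u}).
\]
So $\mathcal{W}_j$ is star-shaped with radial function $R$. The inversion $f$ maps the ray $\boldsymbol{u}$ to itself and maps $\partial\mathcal{H}_j$ onto $\partial\mathcal{W}_j$.

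\emph{Step 2 (reduce to a single edge).} After Alg.~\ref{alg_approx_visible}, consecutive vertices $\boldsymbol{b}_k,\boldsymbol{b}_{k+1}\in\mathcal{V}_{\text{hull}}$ lie on one edge of $\mathcal{H}_j$ and span a radial angle at most $\Delta\theta<\pi$. Their images $f(\boldsymbol{b}_k),f(\boldsymbol{b}_{k+1})$ lie on the same two rays, since $f$ preserves direction. Hence the segments of $\tilde{\mathcal{W}}_j$ are arranged in angular order, and $\tilde{\mathcal{W}}_j$ is star-shaped about $\boldsymbol{0}$ with a piecewise radial function $\tilde R$.

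It therefore suffices to fix one sector between the rays through $\boldsymbol{b}_k$ and $\boldsymbol{b}_{k+1}$. On that sector I must show that the chord $\overline{f(\boldsymbol{b}_k)f(\boldsymbol{b}_{k+1})}$ lies radially inside the true boundary curve $f(\overline{\boldsymbol{b}_k\boldsymbol{b}_{k+1}})$, i.e.\ $\tilde R\le R$.

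\emph{Step 3 (the key convexity claim, the main obstacle).} Within the sector, $\overline{\boldsymbol{b}_k\boldsymbol{b}_{k+1}}$ lies on a line $\{\boldsymbol{x}:\n^\top\boldsymbol{x}=c\}$ with $c>0$. I would parametrize by the angle $\phi$ of the ray.

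On the hull edge, $h(\phi)=c/\cos(\phi-\phi_0)$. The true boundary therefore has radius
\[
R(\phi)=2r_{\text{flip}}-\frac{c}{\cos(\phi-\phi_0)}.
\]
The chord between two boundary points has radius $\tilde R(\phi)=1/(A\cos\phi+B\sin\phi)$ for suitable constants $A,B$, since $1/r$ is a linear combination of $\cos\phi$ and $\sin\phi$ along any line.

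I need $\tilde R\le R$ on the sector. The two functions agree at the endpoints. My first attempt would be the polar convexity criterion: a curve $r(\phi)$ bulges outward beyond its chord exactly when the reciprocal $\rho=1/r$ satisfies $\rho''+\rho\le 0$, with equality for a straight line. So I would verify $\rho''+\rho\le0$ for $\rho=1/R$ with the explicit formula above, i.e.\ that the inverted edge is a convex-outward arc relative to $\boldsymbol{0}$.

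If the computation is messy, the fallback is direct. Set $g(\phi)=1/\tilde R-1/R$. It vanishes at both endpoints and satisfies $g''+g\ge0$ on an interval of length less than $\pi$. A Sturm comparison argument with $\sin$ then gives $g\le0$, which is the same as $\tilde R\le R$.

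Establishing this sign, using $h<r_{\text{flip}}\cdot$const.\ and $R>0$, is where I expect the real work. The angular bound $\Delta\theta<\pi$ is needed so that the comparison interval is short enough.

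\emph{Step 4.} Taking the union over all sectors gives $\tilde R(\boldsymbol{u})\le R(\boldsymbol{u})$ for every direction $\boldsymbol{u}$. By star-shapedness this yields $\tilde{\mathcal{W}}_j\subseteq\mathcal{W}_j$.

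In 3D I would argue the same way on each triangular facet, using the corresponding hyperplane inequality in place of the line.
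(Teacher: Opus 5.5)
Your strategy is workable and differs from the paper's: compare radial functions sector by sector, using that both the chord and the hull edge are linear in $1/r$. However, Step~3 has its signs reversed, so the inequality you set out to verify is false.

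\textbf{The sign error.} Test your polar criterion on a circle $r\equiv a$ about $\boldsymbol{0}$. There $\rho''+\rho=1/a>0$, and its chords lie inside it. So ``bulges outward beyond its chords'' corresponds to $\rho''+\rho\ge 0$, not $\le 0$. Likewise, $g=1/\tilde R-1/R\le 0$ means $\tilde R\ge R$, which is the opposite of the claim. The correct chain is:
\begin{itemize}
\item $\rho''+\rho>0$ for $\rho=1/R$;
\item hence $g''+g=-(\rho''+\rho)<0$;
\item $g=0$ at both ends of an interval of length $<\pi$;
\item comparison with $\cos(\phi-\bar\phi)$, with $\bar\phi$ the midpoint, then gives $g\ge 0$, i.e.\ $\tilde R\le R$.
\end{itemize}

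\textbf{The computation you left open closes cleanly.} Take $\psi=\phi-\phi_0$, $h=c\sec\psi$ and $R=2r_{\text{flip}}-h$. The $h^2$ terms cancel, and using $2\sec^2\psi-3\ge -1$,
\[
\rho''+\rho=\frac{R^2+2R'^2-RR''}{R^3}=\frac{2r_{\text{flip}}\bigl(2r_{\text{flip}}+h(2\sec^2\psi-3)\bigr)}{R^3}\ \ge\ \frac{2r_{\text{flip}}}{R^2}>0 .
\]
Only two facts are used:
\begin{itemize}
\item $c>0$, i.e.\ $\boldsymbol{0}$ is interior to $\mathcal{H}_j$;
\item $h<2r_{\text{flip}}$ along the edge. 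This holds because hull vertices have norm below $2r_{\text{flip}}$ and the norm is convex along a segment.
\end{itemize}
No bound of the form $h<r_{\text{flip}}\cdot\mathrm{const}$ is needed. In fact $h$ can dip below $r_{\text{flip}}$ on a long hull edge, so in Step~1 you should use monotonicity of $f$ on $(0,2r_{\text{flip}})$, not $(0,r_{\text{flip}})$. The interval length $<\pi$ also follows from $c>0$, since an edge subtends less than $\pi$ at an interior point. It does not come from $\Delta\theta$; the interpolation rule $\lfloor\theta_k/\Delta\theta\rfloor-1$ does not even guarantee spans $\le\Delta\theta$.

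\textbf{Comparison with the paper.} With the signs fixed, your argument is a genuinely different route.
\begin{itemize}
\item The paper fixes a chord point $\boldsymbol{q}_\alpha$ and the hull point $\boldsymbol{a}_\beta$ on the same ray. It uses the law of sines to write both radii via the angular split $\theta=\theta_1+\theta_2$, and reduces the claim to $\Vert\boldsymbol{q}_j\boldsymbol{q}_\alpha\Vert+\Vert\boldsymbol{q}_j\boldsymbol{a}_\beta\Vert\le 2r_{\text{flip}}$. It then replaces $\sin\theta_i/\sin\theta$ by $\theta_i/\theta$ (valid since $\sin x/x$ decreases). This yields a one-variable function whose maximum over the admissible range is attained at the endpoints, where it equals $2r_{\text{flip}}$. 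It is a pointwise, elementary estimate with no curvature or comparison argument.
\item Your corrected argument is second-order: it shows every arc of $\partial\mathcal{W}_j$ has strictly positive curvature as seen from $\boldsymbol{q}_j$. This makes the minimal hypotheses ($c>0$, $R>0$) transparent. The same maximum-principle step, run with boundary data $\ge 0$, also yields the paper's Corollary~\ref{prop_convex} (convexity of $\mathcal{W}_j\cap\triangle\boldsymbol{q}_j\boldsymbol{a}_k\boldsymbol{a}_{k+1}$), which the paper proves separately.
\end{itemize}
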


\begin{proposition}
    The approximated LoS-distance $\tilde{d}^{\text{los}}$ is the lower bound of the exact LoS-distance $d^{\text{los}}$, \ie, $\tilde{d}^{\text{los}} \le d^{\text{los}}$.
\label{prop_lower_bound}
\end{proposition}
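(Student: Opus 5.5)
The plan is to derive Proposition~\ref{prop_lower_bound} from Proposition~\ref{prop_subset} by a purely set-theoretic argument about distances to boundaries. Both regions are star-shaped with respect to $\q_j$ (the origin). $\tilde{\mathcal{W}}_j$ is star-shaped because its vertices $f(\mathcal{V}_{\text{hull}})$ are ordered by radial angle and are joined consecutively. $\mathcal{W}_j$ is star-shaped by the construction in Sec.~\ref{sec_preliminary_visibility}. We only consider a robot $i$ with $\q_i\in\tilde{\mathcal{W}}_j$, which is the case in which $\tilde{d}^{\text{los}}_{ji}$ is used. Proposition~\ref{prop_subset} then gives $\q_i\in\mathcal{W}_j$ as well, so $d^{\text{los}}_{ji}$ in Def.~\ref{def_los_distance} is well defined.

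\textbf{Step 1.} Let $\p^\star\in\partial\mathcal{W}_j$ attain $d^{\text{los}}_{ji}=\Vert\q_i-\p^\star\Vert$. Such a point exists because the boundary is closed and bounded. Since $d^{\text{los}}_{ji}$ is the distance to the boundary and $\q_i$ is interior, the open ball $B(\q_i,d^{\text{los}}_{ji})$ contains no boundary point of $\mathcal{W}_j$. This ball is connected and contains $\q_i\in\mathcal{W}_j$, so it lies entirely in $\mathcal{W}_j$. The point $\p^\star$ itself, however, is not in the open set $\mathcal{W}_j$: the strict inequality in Def.~\ref{def_visible_region} makes $\mathcal{W}_j$ open, so $\p^\star\notin\mathcal{W}_j$.

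\textbf{Step 2.} Consider the segment from $\q_i$ to $\p^\star$. Its starting point $\q_i$ lies in $\tilde{\mathcal{W}}_j$. Its endpoint $\p^\star$ lies outside $\mathcal{W}_j$, hence outside $\tilde{\mathcal{W}}_j$ by Proposition~\ref{prop_subset}, or at most on $\partial\tilde{\mathcal{W}}_j$ if $\tilde{\mathcal{W}}_j$ is taken to be closed. By connectedness of the segment, it must meet $\partial\tilde{\mathcal{W}}_j$ at some point $\tilde{\p}$. Therefore
\[
\tilde{d}^{\text{los}}_{ji}=\min_k\tilde{d}_k\le\Vert\q_i-\tilde{\p}\Vert\le\Vert\q_i-\p^\star\Vert=d^{\text{los}}_{ji},
\]
where the first equality uses that the boundary $\partial\tilde{\mathcal{W}}_j$ is exactly the union of the segments $\overline{\boldsymbol{a}_k\boldsymbol{a}_{k+1}}$, and that Eq.~(\ref{def_dk}) computes the exact point-to-segment distance.

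\textbf{Step 3.} The same inequality applied with the roles of $i$ and $j$ exchanged gives $\tilde{d}^{\text{los}}_{ij}\le d^{\text{los}}_{ij}$. Hence $D_{ij}$ also lower-bounds the exact mutual LoS distance, which is what the safety claim needs.

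The main obstacle is the topological bookkeeping rather than any computation. I need that $\partial\tilde{\mathcal{W}}_j$ is exactly the polygon through the consecutive flipped vertices, which holds if that polygon is simple, as it is when the vertices are radially ordered around $\q_j$ so that no segments cross. I also need that $\mathcal{W}_j$ is open, with its boundary given by the inverted hull boundary. I would settle these points first and treat open versus closed regions carefully. The distance inequality then follows directly from Proposition~\ref{prop_subset} and monotonicity of the distance to the complement under set inclusion.
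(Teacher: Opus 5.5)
Your proof is correct and is essentially the paper's argument. Both reduce the claim to Prop.~\ref{prop_subset} together with the fact that distance to the boundary is monotone under inclusion of regions. The paper, phrasing it as a contradiction, inscribes the disc of radius $\tilde{d}^{\text{los}}_{ji}$ about $\q_i$ in $\tilde{\mathcal{W}}_j\subseteq\mathcal{W}_j$. You instead intercept the segment from $\q_i$ to the nearest point of $\partial\mathcal{W}_j$ with $\partial\tilde{\mathcal{W}}_j$. Your explicit hypothesis $\q_i\in\tilde{\mathcal{W}}_j$ and your open/closed bookkeeping are exactly what the paper's step $\tilde{\mathcal{S}}\subseteq\tilde{\mathcal{W}}_j$ uses implicitly.
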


The detailed proofs of Prop.~\ref{prop_subset} and Prop.~\ref{prop_lower_bound} are provided in the Appendix C and D respectively due to space limitations.

\subsection{Topology Optimization through Masked Graph Laplacian}

In a graph Laplacian-based connectivity controller, the topology among robots is implicitly controlled by the preferred fielder eigenvalue $\lambda_2^{\text{min}}$, \ie, a larger $\lambda_2^{\text{min}}$ will result in denser connections between robots~\cite{robuffogiordano_PassivitybasedDecentralized_2013, bai2025realmrealtimelineofsightmaintenance}. 
However, the explicit topology specification may impose redundant connections between robots, which hinders their navigation efficiency, and sometimes leads to occasional livelocks around obstacles, as shown in Fig.~\ref{fig_snapshot_connectivity}.
% where robots are stuck around an obstacle because of redundant connections.
% There is no priority assigned to these edges, and robots' movements can simultaneously weaken multiple edges so that a large connectivity force is incured to keep them stucked.

To address the above limitations, we propose the topology optimization above the real-time connectivity graph, where only a minimal topology $\mathcal{T}(t)\subseteq \mathcal{G}(t)$ is selected to maintain based on the following criteria:
(1) $\mathcal{T}(t)$ contains no redundant edges to ensure connectivity;
(2) the edges in $\mathcal{T}(t)$ require minimal efforts to maintain considering the robots’ navigation movement.
% \begin{itemize}
%     \item 
%     \item The edges in $\mathcal{T}(t)$ require minimal efforts to maintain under the robots’ navigation movement.
% \end{itemize}
While quantifying these efforts is non-trivial, we circumvent explicit evaluation by directly examining the conflict between connectivity maintenance and navigational movement.
% While it is non-trivial to quantify such efforts, we propose a straightforward approach to evaluate the violation between connectivity maintenance and navigational movement.
Concretely, recall that the real-time edge weight in $\mathcal{G}$ is defined as $A_{ij} = \alpha_{ij}\cdot \beta_{ij} \cdot \gamma_{ij}$, a larger $A_{ij}$ indicates that two robots $i$ and $j$ are well connected under the disturbance of their navigational movements.
Otherwise, $A_{ij}$ would be small.

Therefore, we formulate the topology optimization problem as an MST search problem over $\mathcal{G}(t)$, where each edge weight is modified as 
\begin{equation}
\small
    A_{ij}^{\text{mst}} = -\alpha_{ij}\cdot \beta_{ij} + \frac{d_{ij}}{d_{\text{max}}^{\text{com}}}.
\label{eq_mst_weight}
\end{equation}
The first term in Eq.~(\ref{eq_mst_weight}) indicates the efforts required to maintain the connectivity constraints. 
Note here we omit the weight $\gamma_{ij}$ for collision avoidance (C3) because it is always applied between robots to ensure safety.
Moreover, the second term, $\frac{d_{ij}}{d^{\text{com}}_{\text{max}}}$, serves to prioritize shorter edges.
The process of topology optimization is illustrated in Fig.~\ref{fig_framework}.
We use the Kruskal's algorithm to find the optimal spanning tree $\mathcal{T}$~\cite{kruskal1956shortest}.
% The process can also be distributed by utilizing distributed MST search algorithms, like the GHS algorithm~\cite{gallager1983distributed, pandurangan2018distributed}.

After obtaining the minimal topology $\mathcal{T}$, we mask the elements in the original generalized graph Laplacian matrix $\Lp$ as follows:
\begin{itemize}
    \item if $(i, j) \notin \mathcal{T}$, $A_{ij} = 0$ if $\gamma_{ij} = 1$, else $A_{ij} = \gamma_{ij}$.
    \item if $(i, j)\in \mathcal{T}$, $A_{ij}$ remains unchanged;
\end{itemize}
Here $\gamma_{ij} = 1$ indicates the collision avoidance constraints (C3) are not triggered.
The diagonal elements in $\Lp$ are then modified accordingly.
Note that for those edges not in $\mathcal{T}$, we only keep $\gamma_{ij}$ to ensure safety, while eliminating their communication and LoS maintenance constraints.
Based on the masked $\Lp$, the connectivity velocity can finally be derived following Eq.~(\ref{eq_connectivity_force}) as
\begin{equation}
\small
      \boldsymbol{u}^{\text{c}}_i =  -\frac{1}{(\lambda_2 - \lambda_{2}^{\text{min})^2}}\cdot \sum_{j\in \mathcal{N}_{i}} \frac{\partial A_{ij}}{\partial \q_{i}} \cdot (v_{2_i} - v_{2_j})^{2},
\label{eq_connect_force_final}
\end{equation}
where $\frac{\partial A_{ij}}{\partial \q_{i}} = \frac{\partial\alpha_{ij}}{\partial \q_{i}}\cdot \gamma_{ij}\cdot \beta_{ij}  + \alpha_{ij}\cdot \frac{\partial \gamma_{ij}}{\partial \q_{i}}\cdot \beta_{ij} + \alpha_{ij}\cdot \gamma_{ij}\cdot \frac{\partial \beta_{ij}}{\partial \q_i}$; $\lambda_{2}$ and $\boldsymbol{v}_{2}$ are the Fiedler eigenvalue and eigenvector of $\Lp$.

\begin{remark}
The graph Laplacian-based controller may fail to maintain a specific edge in $\mathcal{G}$, because $\lambda_2$ is only non-decreasing (rather than strictly increasing) w.r.t. the number of edges in $\mathcal{G}$~\cite{fiedler1973algebraic}, \ie, preserving a specific edge may not contribute to $\lambda_2$. 
However, in our case, since we reduce $\mathcal{G}$ to an MST by topology optimization, the $\lambda_2$ of the masked $\Lp$ is now strictly increasing w.r.t. the number of remaining edges.
Therefore, correct $\boldsymbol{u}_{i}^{\text{c}}$ can be derived to maintain edges in $\mathcal{T}$.
\end{remark}

\begin{remark}
When topology optimization is performed using distributed MST algorithms like GHS algorithm~\cite{gallager1983distributed}, the connectivity velocity in Eq.~(\ref{eq_connectivity_force}) can be calculated distributively by each robot, requiring only one-hop communication with its neighbors. The proof is similar to the proof in~\cite{bai2025realmrealtimelineofsightmaintenance}.
\end{remark}

\section{Experiment Results}
\label{sec_experiment}

\begin{table}[!t]
\vspace{6pt} 
\caption{Approximation error of LoS-distance}
\vspace{-8pt}
\scriptsize
\centering
\begin{tabular}{c|c|c|c|r|c|r}
\Xhline{0.8pt}
$r_{\text{flip}}$ & Metrics                                   & $\Delta\theta$ & $\vert \mathcal{V}_{\text{hull}}  \vert$              & $t_{\text{avg}}\downarrow$ & $e_{\text{avg}}\downarrow$ & $e_{\text{max}}\downarrow$  \\ \Xhline{0.6pt}
\multirow{4}{*}{150}  & $d^{\text{los}}$                          & --                       & 428                                        & 1157.84               & 0.00                      & 0.00                      \\ \cline{2-7} 
                      & \multirow{3}{*}{$\tilde{d}^{\text{los}}$} & --                       & 428                                        & 0.16                  & 53.25                  & 253.94                 \\  
                      &                                           & 2                         & 471\scriptsize{($+10\%$)}               & 0.16                  & 1.49                   & 4.58                   \\  
                      &                                           & 1                         & 526\scriptsize{($+23\%$)}   & 0.17                  & \textbf{0.34}                   & \textbf{1.20}                   \\ \hline
\multirow{4}{*}{500}  & $d^{\text{los}}$                          & --                        & 577                                        & 2243.31               & 0.00                      & 0.00                      \\ \cline{2-7} 
                      & \multirow{3}{*}{$\tilde{d}^{\text{los}}$} & --                        & 577                                        & 0.17                  & 19.76                  & 119.32                 \\  
                      &                                           & 2                         & 601\scriptsize{($+4\%$)} & 0.17                  & 2.82                   & 11.65                  \\  
                      &                                           & 1                         & 640\scriptsize{($+11\%$)} & 0.18                  & \textbf{0.60}                   & \textbf{1.88}                   \\ \hline
\multirow{4}{*}{1000} & $d^{\text{los}}$                          & --                        & 608                                        & 3416.79               & 0.00                      & 0.00                      \\ \cline{2-7} 
                      & \multirow{3}{*}{$\tilde{d}^{\text{los}}$} & --                        & 608                                        & 0.17                  & 13.52                  & 78.63                  \\ 
                      &                                           & 2                         & 624\scriptsize{($+3\%$)} & 0.20                  & 4.36                   & 25.60                  \\ 
                      &                                           & 1                         & 655\scriptsize{($+8\%$)} & 0.21                  & \textbf{0.88}                   & \textbf{4.24}                   \\ \Xhline{0.8pt}
\end{tabular}\\
\label{tab_approx_los_dist}
\vspace{1mm}
\begin{minipage}{\linewidth}
\scriptsize Notes: $r_{\text{flip}}$ is the flipping radius (m); $\Delta \theta$, the interpolation threshold ($^\circ$); $t_{\text{avg}}$, the averaged computational time (ms); $e_{\text{avg}}$ and $e_{\text{max}}$, the averaged and the maximum approximation errors (cm).
% The units of the terms are as follows: $r_{\text{flip}}$ (m), $\Delta \theta$ ($^\circ$), $t_{\text{avg}}$ (ms), $e_{\text{avg}}$ (cm), $e_{\text{max}}$ (cm).
"--" means no interpolation is performed.
% The experiments are conducted with a frame of 2D LiDAR scan with 718 points, as shown in Fig.~\ref{fig_linear_approx}(a).
The percentages show the number of interpolated points w.r.t. the number of the original hull vertices.
\end{minipage}
\vspace{0pt}
\end{table}

\begin{figure}[!t]
\centering
\includegraphics[width=\linewidth]{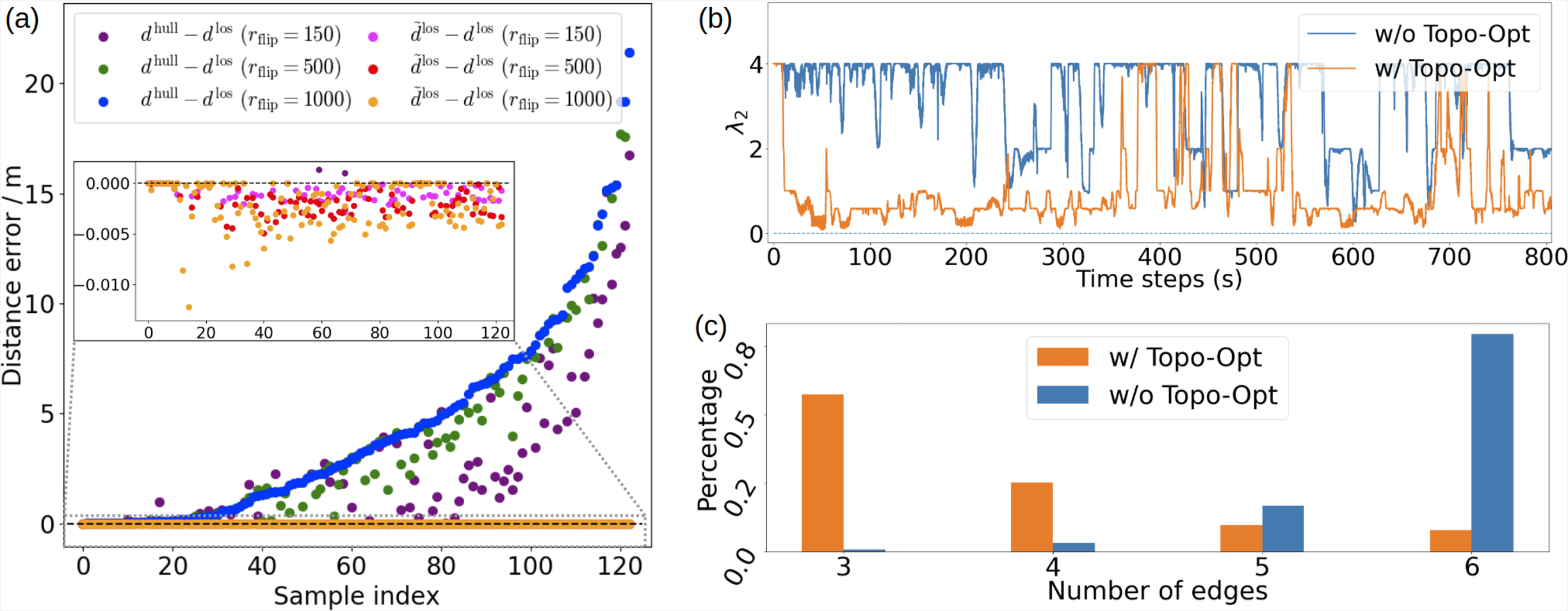}
\vspace{-18pt} 
\caption{(a) Error distribution of different LoS-distance metrics at uniformly sampled points in a visible region shown in Fig.~\ref{fig_linear_approx}, under various flipping radius $r_{\text{flip}}$. Here we set $\Delta\theta = 0.5^{\circ}$. 
(b) and (c): Ablation study of using Topo-Opt when four robots are exploring a garage environment in Fig.~\ref{fig_wide_grid}(b), where (b) compares the temporal variation of $\lambda_2$ and (c) shows the histogram of the number of connections between four robots during exploration.}
\label{fig_los_dist_compare}
\vspace{-10pt}
\end{figure}

This section evaluates the proposed framework in terms of the effectiveness of LoS-distance approximation, the robustness of connectivity maintenance, and navigation efficiency in multi-robot navigation and exploration tasks.
% This section evaluates the performance of LoS-distance approximation, the robustness of the proposed LoS-connectivity maintenance framework, and navigation efficiency with topology optimization in multi-robot navigation and exploration tasks.
The target assignment and the calculation of navigation velocity follow the design in~\cite{bai2025realmrealtimelineofsightmaintenance}, as shown in Fig.~\ref{fig_framework}. 
% We employ the framework developed in~\cite{bai2025realmrealtimelineofsightmaintenance} for task assignment, path planning, and navigation velocity calculation. 
We compare three different metrics and several topology strategies, including a fixed topology, a graph Laplacian-based topology~\cite{bai2025realmrealtimelineofsightmaintenance}, and our proposed adaptive minimal topology, and conduct a series of ablation studies.
% The proposed method is compared with a baseline method~\cite{bai2025realmrealtimelineofsightmaintenance}, a fixed topology-based method, and several ablation studies.
All simulation experiments are conducted in Gazebo on a desktop with an i9-13900 CPU and 32 GB of RAM, where robots are equipped with 2D LiDARs with 360 degrees of FoV and a sensing range of 30 meters.
The real-world experiments are introduced in Sec.~\ref{sec_real_world_experiment}.
The laser points that hit neighboring robots are removed before constructing the visible regions.
Our connectivity controller runs at $30$ Hz thanks to its inherently distributed calculation.

% For multi-robot exploration, we adopt Karto~\cite{Konolige_Karto_2010} for multi-robot mapping.

\subsection{Performance of LoS-Distance Approximation}
\label{sec_experiment_los_approximation}

This section compares the computational efficiency and approximation accuracy of the approximated LoS-distance $\tilde{d}^{\text{los}}$ versus the exact $d^{\text{los}}$.
The metrics are evaluated at uniformly sampled positions within the visible region in Fig.~\ref{fig_linear_approx}.
% The results are shown in Tab.~\ref{tab_approx_los_dist} and Fig.~\ref{fig_los_dist_compare}(a).
% The results are shown in Tab.~\ref{tab_approx_los_dist} and Fig.~\ref{fig_los_dist_compare}.
As shown in Tab.~\ref{tab_approx_los_dist}, the averaged computational time of $\tilde{d}^{\text{los}}$ is less than one millisecond thanks to its closed-form expression, while solving $d^{\text{los}}$ takes more than a second, which prevents it from real-time applications.
% In contrast, our method only takes less than one millisecond given the same Lidar scan, thanks to the closed-form expression of $\tilde{d}^{\text{los}}$.
Moreover, the proposed $\tilde{\mathcal{W}}_j$ well approximates the irregular boundary of the original visible region by applying radial angle-based interpolation, 
with average and maximum errors less than 5 cm and 30 cm, respectively, even under aggressive visible regions (\ie, larger $r_{\text{flip}}$) and coarse interpolation.

The detailed distribution of the approximation error is shown in Fig.~\ref{fig_los_dist_compare}(a), where we further include the previous metric $d^{\text{hull}}$.
Aligned with our analysis in Sec.~\ref{sec_visible_region_and_metrics}, the metric $\tilde{d}^{\text{los}}$ is consistent with $d^{\text{los}}$ with small approximation errors, while $d^{\text{hull}}$ exhibits significant errors in some areas (larger than 20 meters), indicating it is unreliable as a LoS-distance metric. 
It is worth noting that $\tilde{d}^{\text{los}}$ remains consistently non-positive, which experimentally validates our Prop.~\ref{prop_lower_bound}.
Generally, the approximation error increases with a larger flipping radius $r_{\text{flip}}$ and a coarser interpolation, which requires a trade-off between the accuracy and computation efficiency.

\subsection{Robustness with Different LoS-Distance Metrics}

This section evaluates the robustness of the proposed connectivity maintenance framework using different LoS-distance metrics, including $d^{\text{hull}}$~\cite{xia_RELINKRealTime_2023, liu_StarconvexConstrained_2022}, $d^{\text{hull}}\cos{\theta_{k^*}}$~\cite{bai2025realmrealtimelineofsightmaintenance}, and $\tilde{d}^{\text{los}}$ (proposed).
We conduct ablation studies to evaluate performance with and without topology optimization (Topo-Opt), under different trigger distance $d^{\text{los}}_{\text{max}}$ of LoS constraints, and for varying ranges of visible regions.
We deploy four robots to navigate to their respective targets in a environment cluttered with small and irregular obstacles that can frequently block the LoS between robots, as shown in Fig.~\ref{fig_wide_grid}(a).
% It is challenging for connectivity maintenance because small obstacles frequently block the LoS between robots.
% The performance is evaluated considering the existence of redundant connections (with or without Topo-Opt), different trigger distance for LoS maintenance (determined by $d^{\text{los}}_{\text{max}}$ in Eq.~(\ref{eq_def_beta})), and various aggressiveness of visible regions (determined by $r_{\text{flip}}$).

% Their respective targets are randomly generated within the environment.
% Note that it is challenging for robots to always maintain connectivity in such a environment  when they move to their respective targets.

The results for LoS-connectivity maintenance are summarized in Tab.~\ref{tab_success_rate_with_different_metrics}.
% with or without applying topology optimization, and under different flipping radius.
% Four robots are deployed in a multi-target navigation task within an unknown cluttered environment shown in Fig.~\ref{fig_wide_grid}, based solely on their onboard LiDAR sensors.
% For the metric $d^{\text{hull}}$, we also tried with a larger $d^{\text{los}}_{\text{max}}$ to reserve enough safety distance, as in previous works~\cite{bai2025realmrealtimelineofsightmaintenance, xia_RELINKRealTime_2023}.
% As we analyzed in Sec.~\ref{sec_visible_region_and_metrics}, it is challenging for $d^{\text{hull}}$ to reflect the actual LoS-distance with a larger flipping radius $r_{\text{flip}}$. 
% Additionally, accurate LoS-distance evaluation is even more critical when applying topology optimization, as there can be no redundant edges, and fail to maintain any connected edge can lead to the disconnection of the whole team.
% The results are shown in Tab.~\ref{tab_success_rate_with_different_metrics}, where we compare that whether the system can maintain team connectivity with or without applying topology optimization, and under different flipping radius.
% Four robots are deployed in a multi-target navigation task within an unknown cluttered environment shown in Fig.~\ref{fig_wide_grid}, based solely on their onboard LiDAR sensors.
% For the metric $d^{\text{hull}}$, we also tried with a larger $d^{\text{los}}_{\text{max}}$ to reserve enough safety distance, as in previous works~\cite{bai2025realmrealtimelineofsightmaintenance, xia_RELINKRealTime_2023}.
In general, $d^{\text{hull}}$ and $d^{\text{hull}}\cos{\theta_{k^*}}$ show similar performance: they perform well when Topo-Opt is disabled and a sufficiently large trigger distance $d^{\text{los}}_{\text{max}}$ is set.
However, they fail to ensure LoS connectivity under larger visible ranges when a smaller $d^{\text{los}}_{\text{max}}$ is set.
Moreover, when redundant connections are removed with Topo-Opt, both metrics consistently fail regardless of the trigger distance and flipping radius $r_{\text{flip}}$.
This is due to the inherent inaccuracy of $d^{\text{hull}}$ as we previously shown in Sec.~\ref{sec_experiment_los_approximation}.
In contrast, our metric $\tilde{d}^{\text{los}}$ can consistently preserve connectivity under both aggressive visible ranges (with $r_{\text{flip}} = 1000\,\text{m}$) and fragile connectivity topology (with almost no redundant connections), verifying its reliability for LoS-distance evaluation.
% Moreover, our method can support very aggressive formulation of visible regions when $r_{\text{flip}} = 1000\,\text{m}$. 
These advantages enable our framework to simultaneously cover a larger area while reliably maintaining minimal connections among robots.

% , verifying that previous metrics~\cite{bai2025realmrealtimelineofsightmaintenance, xia_RELINKRealTime_2023} cannot be used for reliable LoS-distance evaluation.

% even with an earlier trigger distance $d^{\text{los}}_{\text{max}}$, which is 

% When Topo-Opt is disabled and a sufficiently large trigger distance $d^{\text{los}}_{\text{max}}$ is set, we found that the metric $d^{\text{hull}}$ performs well because the redundant connections (explicitly determined by $\lambda_2$) help to secure the connectivity during navigation.
% However, due to inherent inaccuracy of $d^{\text{hull}}$, once a less conservative $d^{\text{los}}_{\text{max}}$ is set, \eg, $d^{\text{los}}_{\text{max}} = 1.2\,\text{m}$, the method with $d^{\text{hull}}$ fails to keep connectivity when $r_{\text{flip}}$ gets larger.
% Furthermore, when redundant connections are removed through Topo-Opt, the metric $d^{\text{hull}}$ fails to maintain connectivity in all trials, which indicates that simply setting a large trigger distance $d^{\text{los}}_{\text{max}}$ cannot ensure $d^{\text{hull}}$ a reliable metric for the LoS-distance evaluation as in previous works~\cite{bai2025realmrealtimelineofsightmaintenance, xia_RELINKRealTime_2023}.
% \red{Although $d^{\text{hull}}\cos{\theta_{k^*}}$ further considers sensiviity of LoS-distance, it still fails to overcome the limitation of $d^{\text{hull}}$, therefore presenting similar performance as $d^{\text{hull}}$.}

% In contrast, with the proposed metric $\tilde{d}^{\text{los}}$, 

To further showcase these advantages, we deploy the proposed framework in multi-robot exploration tasks, requiring robots to explore both structured and unstructured environments, as shown in Fig.~\ref{fig_wide_grid}(b) and (c).
Despite the existence of dense obstacles, the proposed method guarantees LoS-connectivity during the exploration process, verified by the positivity of $\lambda_2$ as shown in Fig.~\ref{fig_los_dist_compare}(b).
Moreover, our method maintains significantly fewer but necessary connections between robots when Topo-Opt is enabled, as evidenced by the histogram of the number of connections in Fig.~\ref{fig_los_dist_compare}(c).
The results demonstrate the reliability of our framework for connectivity maintenance even under fragile topologies with fewer connections.

\begin{table}[]
\vspace{6pt} 
\caption{Connectivity maintenance in multi-robot navigation}
\vspace{-8pt}
\scriptsize
\centering
\begin{tabular}{l|c|c|ccc}
\Xhline{0.8pt}  % 加粗顶部框线
% \hline
\multicolumn{1}{c|}{\multirow{2}{*}{Metrics}} & \multicolumn{1}{c|}{\multirow{2}{*}{Topo-Opt}} & \multicolumn{1}{c|}{\multirow{2}{*}{$d^{\text{los}}_{\text{max}}$(m) $\downarrow$}} & \multicolumn{3}{c}{$r_{\text{flip}}$(m) $\uparrow$}                                                  \\ \cline{4-6} 
\multicolumn{1}{c|}{}                         & \multicolumn{1}{c|}{}                          & \multicolumn{1}{c|}{}                                                   & \multicolumn{1}{c|}{150}            & \multicolumn{1}{c|}{500}            & 1000           \\ \Xhline{0.6pt}
\multirow{4}{*}{\begin{tabular}{@{}c@{}}$d^{\text{hull}}$~\cite{xia_RELINKRealTime_2023, liu_StarconvexConstrained_2022}; \\ [1pt]$d^{\text{hull}}\cdot\cos{\theta_k}$~\cite{bai2025realmrealtimelineofsightmaintenance}\end{tabular}} & w/o                                             & $3.0$                                                                   & \multicolumn{1}{c|}{\(\checkmark\)} & \multicolumn{1}{c|}{\(\checkmark\)} & \(\checkmark\) \\ 
                                              & w/o                                             & $1.2$                                                                   & \multicolumn{1}{c|}{\(\checkmark\)} & \multicolumn{1}{c|}{\(\times\)}     & \(\times\)     \\ \cline{2-6} 
                                              &w/                                            & $3.0$                                                                   & \multicolumn{1}{c|}{\(\times\)}     & \multicolumn{1}{c|}{\(\times\)}     & \(\times\)     \\ 
                                              & w/                                            & $1.2$                                                                   & \multicolumn{1}{c|}{\(\times\)}     & \multicolumn{1}{c|}{\(\times\)}     & \(\times\)     \\ \hline
\multirow{2}{*}{$\tilde{d}^{\text{los}}$ (Proposed)}            & w/o                                            & $1.2$                                                                   & \multicolumn{1}{c|}{\(\checkmark\)} & \multicolumn{1}{c|}{\(\checkmark\)} & \(\checkmark\) \\ 
                                              & w/                                            & $1.2$                                                                   & \multicolumn{1}{c|}{\(\checkmark\)} & \multicolumn{1}{c|}{\(\checkmark\)} & \(\checkmark\) \\ % \hline
                                              \Xhline{0.8pt}  % 加粗底部框线
\end{tabular}
\vspace{1mm}
\begin{minipage}{\linewidth}
\scriptsize Notes: $d^{\text{los}}_{\text{min}}$ is the trigger distance for LoS constraints; $d^{\text{los}}_{\text{min}}$ is set to $0.1\,\text{m}$ in all experiments. 
The mark \(\checkmark\) indicates robots successfully navigate to their targets while maintaining connectivity, and \(\times\) indicates failure. 
\end{minipage}
\label{tab_success_rate_with_different_metrics}
\vspace{-5pt}
\end{table}

\subsection{Navigation Efficiency and Applications}

% \begin{figure}[!t]
% \centering
% \includegraphics[width=\linewidth]{image/fig_four.png}
% \caption{Snapshots of real-world experiments, where three drones navigate in an environment with obstacles. More details are provided in the attached videos.}
% \label{fig_real_world}
% \end{figure}

% \begin{figure}[t]
% % \vspace{6pt}
% \centering
% \subfloat[Group (1): one target.]{\includegraphics[width=.495\linewidth]{image/fig_group1.png}}\hspace{-1pt}
% \subfloat[Group (2): four targets.]{\includegraphics[width=.49\linewidth]{image/fig_group2.png}}
% \caption{Comparison of time and distance efficiency in four-robot navigation in a cluttered environment shown in Fig.~\ref{fig_wide_grid}(a). Averaged results and standard deviations are obtained from five independent runs. Different cases correspond to different distributions of targets. (a) only one robot is assigned a random target in the environment; (b) all four robots have their targets. See the supplementary video for full robot trajectories and dynamic behaviors.}
% \label{fig_efficiency_one}
% \vspace{-15pt}
% \end{figure}

\begin{figure}[t]
% \vspace{6pt}
\centering
\includegraphics[width=\linewidth]{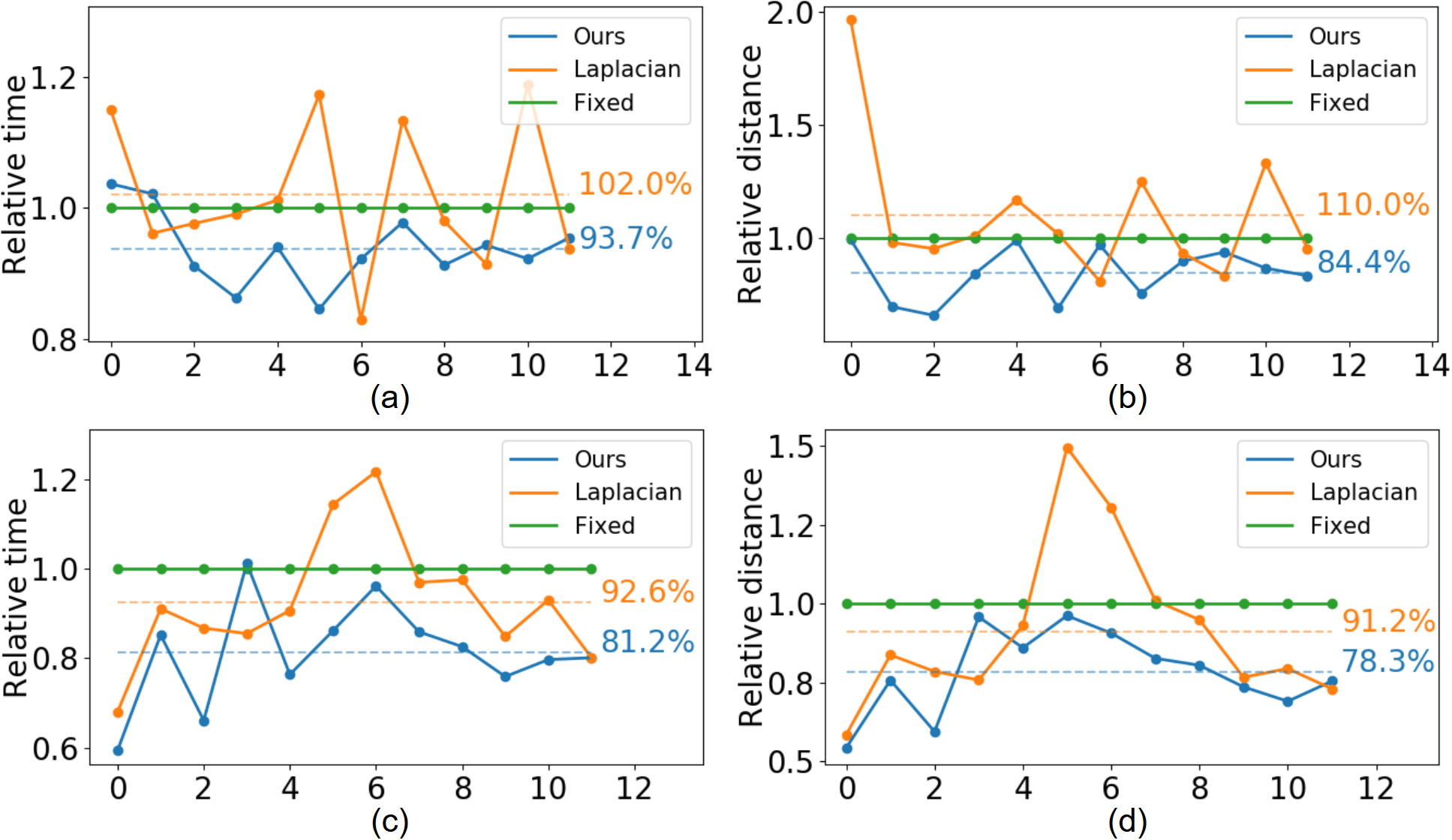}
\vspace{-20pt} 
\caption{Relative time and distance efficiency of compared methods with four robots navigating in a cluttered environment shown in Fig.~\ref{fig_wide_grid}(a). Dashed lines indicate the average performance.
The indices of different runs are sorted in ascending order based on the total distance to the targets.
(a) and (b): only one robot is assigned a random target; (c) and (d): all four robots have their targets. }
\label{fig_efficiency_one}
\vspace{-10pt}
\end{figure}

\begin{figure}[t]
\vspace{6pt}
\centering
\includegraphics[width=\linewidth]{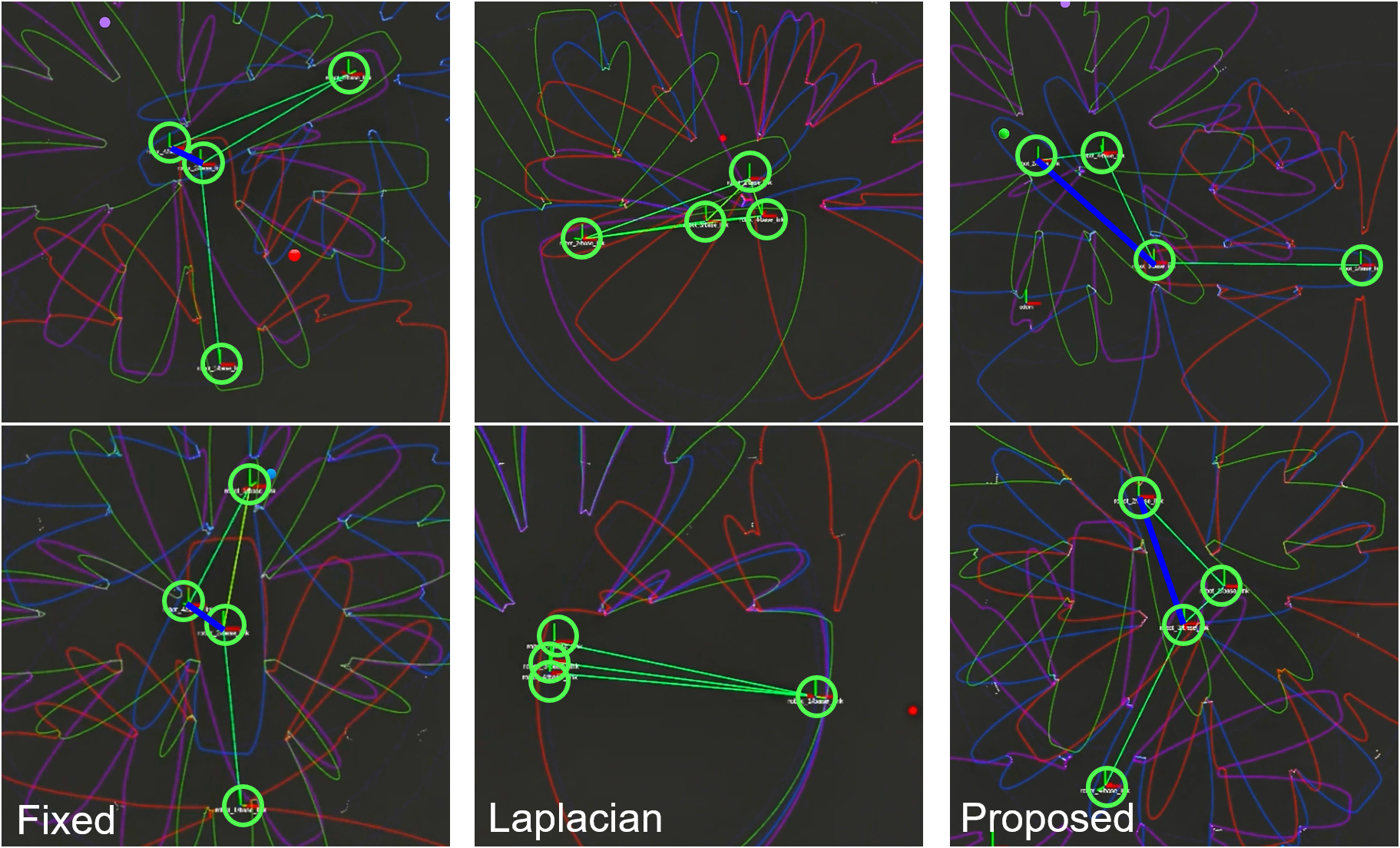}
\vspace{-18pt} 
\caption{Snapshots of connectivity graphs with the Fixed, Laplacian, and the proposed methods. The figure highlights the robots (green circles), selected connections (green edges), and redundant connections (blue edges). The Fixed method fails to adapt to a better topology to facilitate navigation efficiency; the Laplacian method tends to maintain redundant connections, and can lead to occasional livelock around obstacles; the proposed method adaptively optimizes the topology that facilitates navigation efficiency. The results are best appreciated with the video in supplementary materials.}
\label{fig_snapshot_connectivity}
\vspace{-10pt}
\end{figure}

% \begin{figure}[t]
% % \vspace{6pt}
% \centering
% \subfloat[Total time spent.]{\includegraphics[width=.48\linewidth]{image/fig_time1.png}}\hspace{1pt}
% \subfloat[Total distance travelled.]{\includegraphics[width=.48\linewidth]{image/fig_dist1.png}}\\
% \subfloat[Total time spent.]{\includegraphics[width=.47\linewidth]{image/fig_time2.png}}\hspace{1pt}
% \subfloat[Total distance travelled.]{\includegraphics[width=.49\linewidth]{image/fig_dist2.png}}
% \caption{Comparison of time and distance efficiency in multi-robot navigation tasks, with the averaged results and standard deviations in five independent runs. Different cases correspond to different distributions of navigation targets. In (a) and (b), only one robot is assigned a random target in the environment, while others only move to maintain connectivity; in (c) and (d), all four robots have their targets.}
% \label{fig_efficiency_one}
% \end{figure}
This section compares the navigation efficiency of related methods with different connectivity topologies, including a fixed topology (\textbf{Fixed}), the topology explicitly determined by $\lambda_2$ (\textbf{Laplacian}), and the topology obtained from Topo-Opt (\textbf{Ours}).
We conduct two groups of experiments with different proportions of free robots: in group (1), only one robot is assigned a target; while in group (2), all four robots have their respective targets. 
Each group includes twelve independent runs with randomly generated targets in the environment shown in Fig.~\ref{fig_wide_grid}(a), which contains dense and cluttered obstacles.

% with dense and cluttered obstacles. 

The relative time and distance efficiency w.r.t. the Fixed method are shown in Fig.~\ref{fig_efficiency_one}. The snapshots of connectivity graphs in group (2) is also depicted in Fig.~\ref{fig_snapshot_connectivity}.
In group (1) (Fig.~\ref{fig_efficiency_one}(a)(b)), the Laplacian method has the worst navigation efficiency because it maintains unnecessary connections that influence the movements of unrelated robots. 
In group (2), (Fig.~\ref{fig_efficiency_one}(c)(d)), the Fixed method shows the worst navigation efficiency as it blindly sticks to one fixed topology; the Laplacian method performs slightly better since the unnecessary connections help to alleviate conflicts in robots' navigational directions.
In both settings, the proposed method shows the best time and distance efficiency on average, with around $10\%$ and $20\%$ improvements w.r.t. the Laplacian and the Fixed methods, respectively, thanks to the topology optimization that minimizes the interference to navigation efficiency.
Moreover, the relative improvement of the distance efficiency increases from $15.6\%$ to $21.7\%$ as the number of free robots increases, indicating the potential of Topo-Opt in large scale multi-robot systems.
Finally, it is worth noting that all methods use the proposed LoS-distance metric $\tilde{d}^{\text{los}}$ to ensure robust connectivity.
This highlights the generality of our framework, which can support either the explicit minimal topology requirements (either adaptive or fixed) or the topologies implicitly determined by the Fiedler eigenvalue.

\subsection{Real-World Experiment}
\label{sec_real_world_experiment}

\begin{figure}[!t]
\vspace{6pt}
\centering
\includegraphics[width=\linewidth]{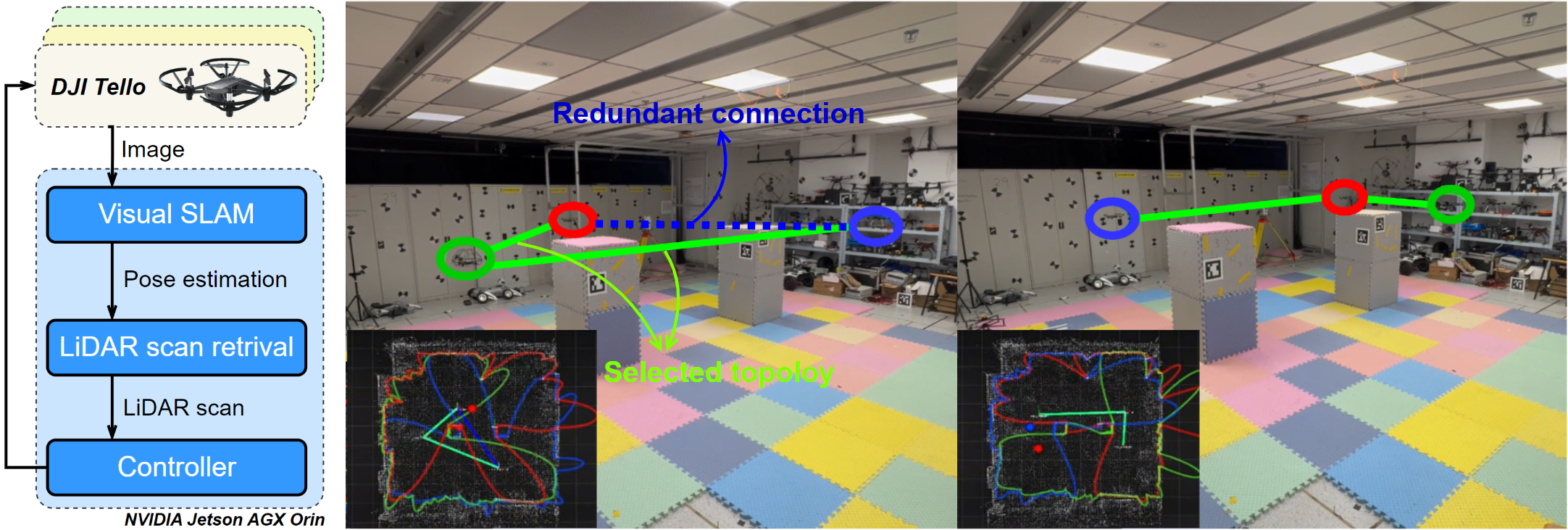}
\vspace{-20pt} 
\caption{Snapshots of real-world experiments where three drones navigate in an environment with obstacles. More details are in the attached videos.}
\label{fig_real_world}
\vspace{-10pt}
\end{figure}

We deploy the proposed method in real-world multi-robot navigation tasks with a multi-UAV system comprising three DJI Tello UAVs~\cite{li2025airswarmenablingcosteffectivemultiuav}, as shown in Fig.~\ref{fig_real_world}. 
Since the Tello UAV cannot mount LiDAR sensors, we pre-build a point cloud map of the environment and only use it to generate onboard LiDAR scans based on robots' real-time visual localization with onboard cameras~\cite{xu2025airslam}. 
% The localization of UAVs is obtained from an onboard visual SLAM system~\cite{xu2025airslam}.
While we manually controlled one drone to fly around obstacles, the other two drones can autonomously move to maintain LoS-connectivity and take adaptive topologies for navigation efficiency, verifying the effectiveness of the proposed method.
Notably, the entire connected multi-robot system can be deployed in unknown environments relying solely on onboard sensing and computation, underscoring its convenience and practicality for real-world deployment.

% Notably, the entire connected multi-robot system can be deployed in unknown environments based solely on onboard sensing and computational resources, which illustrates the advantages of our method for convenient and practical real-world deployment.

\section{CONCLUSIONS and Discussions}
This paper proposes a LoS-connectivity maintenance method that requires no prior environment maps, based solely on robots' real-time LiDAR scans.
We propose an efficient and reliable LoS-distance metric to support LoS constraints formulation, based on the polygonal approximation of robots' visible regions.
Unnecessary and effort-demanding connections are masked out to minimize the efforts for connectivity maintenance between robots
under external navigation tasks.
The robustness and efficiency of the proposed method are verified with extensive multi-robot navigation experiments in challenging environments with unknown and cluttered obstacles.
% which is proved to be a reliable lower-bound approximation of the actual LoS-distance.
% The connection topology is further optimized by masking edges that require significant efforts to maintain in traditional graph Laplacian-based controllers, which results in a more dispersed formation to enable flexible movements of robots.
% We verify the robustness of the system with extensive multi-robot navigation tasks in cluttered environments.
% Future work is to develop learning-based methods for unified optimization of connectivity maintenance and exploration coordination.
Future work will focus on developing learning-based methods for unified coordination between connectivity maintenance and efficient exploration, rather than treating the two modules separately.

\addtolength{\textheight}{-12cm}   % This command serves to balance the column lengths
                                  % on the last page of the document manually. It shortens
                                  % the textheight of the last page by a suitable amount.
                                  % This command does not take effect until the next page
                                  % so it should come on the page before the last. Make
                                  % sure that you do not shorten the textheight too much.

%%%%%%%%%%%%%%%%%%%%%%%%%%%%%%%%%%%%%%%%%%%%%%%%%%%%%%%%%%%%%%%%%%%%%%%%%%%%%%%%

%%%%%%%%%%%%%%%%%%%%%%%%%%%%%%%%%%%%%%%%%%%%%%%%%%%%%%%%%%%%%%%%%%%%%%%%%%%%%%%%

%%%%%%%%%%%%%%%%%%%%%%%%%%%%%%%%%%%%%%%%%%%%%%%%%%%%%%%%%%%%%%%%%%%%%%%%%%%%%%%%

\bibliographystyle{ieeetr} %ieeetr国际电气电子工程师协会期刊
\bibliography{reference} % ref就是之前建立的ref.bib文件的前缀

\setlength{\textheight}{23.5cm}
\newpage
\section*{Appendix}

\subsection{Formulation of Solving Exact LoS-Distance}

To obtain the accurate LoS-distance $d^{\text{los}}_{ji}$ as in Def.~\ref{def_los_distance}, we have to solve the following nonlinear optimization problem:
\begin{equation}
    \min_{\p} \Vert \p - \q_{i} \Vert_2 ,
\end{equation}
\begin{equation*}
\begin{aligned}
    \text{s.t.}~& i)~\p = f(\p'); \quad ii)~\sum\nolimits_{k} z_{k} = 1, z_{k} \in \{0, 1\};\\
    &iii)~\p' = \sum\nolimits_{k}z_{k}(\alpha \boldsymbol{a}_{k} + (1-\alpha) \boldsymbol{a}_{k+1}), \alpha \in [0, 1].
\end{aligned}
\end{equation*}
The above constraints (i-iii) restrict that the point $\p$ is located at the boundary of $\mathcal{W}_j$, with a corresponding convex hull $\mathcal{H}_j$ with $K$ edges. 
% The exact LoS-distance can then be obtained as $d^{\text{los}}_{ji} = \Vert \p^{*} - \q_{ji} \Vert_2$, where $\p^{*}$ is the optimal value of $\p$, which corresponds to the closest point on the boundary of $\mathcal{W}_j$ to $\q_i$.
Finding the optimal $\p^{*}$ involves solving a high-order equation that generally does not have an analytical solution.
In this work, we solve the above non-linear program using Gurobi. 
The time efficiency of solving the problem in 2D case is compared in Tab.~\ref{tab_approx_los_dist}.

\subsection{Gradient of Approximated LoS-Distance}
Following the definition of $\tilde{d}_{k}$ in Eq.~(\ref{def_dk}), the gradient of $\tilde{d}_{k}$ is derived as
\begin{equation}
    \frac{\partial\tilde{d}_{k}}{\partial\q_{i}} = 
    \begin{cases} 
    \frac{\q_{i} \boldsymbol{a}_{k}}{\Vert \q_{i}\boldsymbol{a}_{k} \Vert},  \quad\quad
    \frac{(\q_{i}\boldsymbol{a}_{k})^{\top}(\boldsymbol{a}_{k+1}\boldsymbol{a}_{k})}{\Vert \boldsymbol{a}_{k+1}\boldsymbol{a}_{k}\Vert} < 0, \\
        \frac{\q_{i}\boldsymbol{a}_{k+1}}{\Vert \q_{i}\boldsymbol{a}_{k+1} \Vert},  \quad \frac{(\q_{i}\boldsymbol{a}_{k})^{\top}(\boldsymbol{a}_{k+1}\boldsymbol{a}_{k})}{\Vert \boldsymbol{a}_{k+1}\boldsymbol{a}_{k}\Vert} > 1, \\
        \frac{(\q_{i}\boldsymbol{a}_{k})}{\tilde{d}_{k}} -\frac{(\boldsymbol{a}_{k+1} \boldsymbol{a}_{k})^{\top}(\q_{i}\boldsymbol{a}_{k})(\boldsymbol{a}_{k+1} \boldsymbol{a}_{k})}{\tilde{d}_{k}\Vert \boldsymbol{a}_{k+1}\boldsymbol{a}_{k}\Vert^{2}}, \quad\text{otherwise}.
    \end{cases}
\end{equation}
The gradient of the approximated LoS-distance can push a robot away from the closest boundary of the neighboring robot's visible region, thereby maintaining LoS-connectivity.

% , while we need a differentiable expression of $d^{\text{los}}_{ji}$ to derive the connectivity velocity.
% Moreover, its high time complexity prevents it from real-time applications, as compared in Tab.~\ref{tab_approx_los_dist}.
% Therefore, we resort to an effective and safe approximation of the visible region that supports real-time LoS-distance evaluation with differentiable expressions, which will be introduced in the next subsection.

\subsection{Proof of Proposition~\ref{prop_subset}}
\label{sec_prove_of_prop2}

\begin{figure}[!h]
\centering
\includegraphics[width=.9\linewidth]{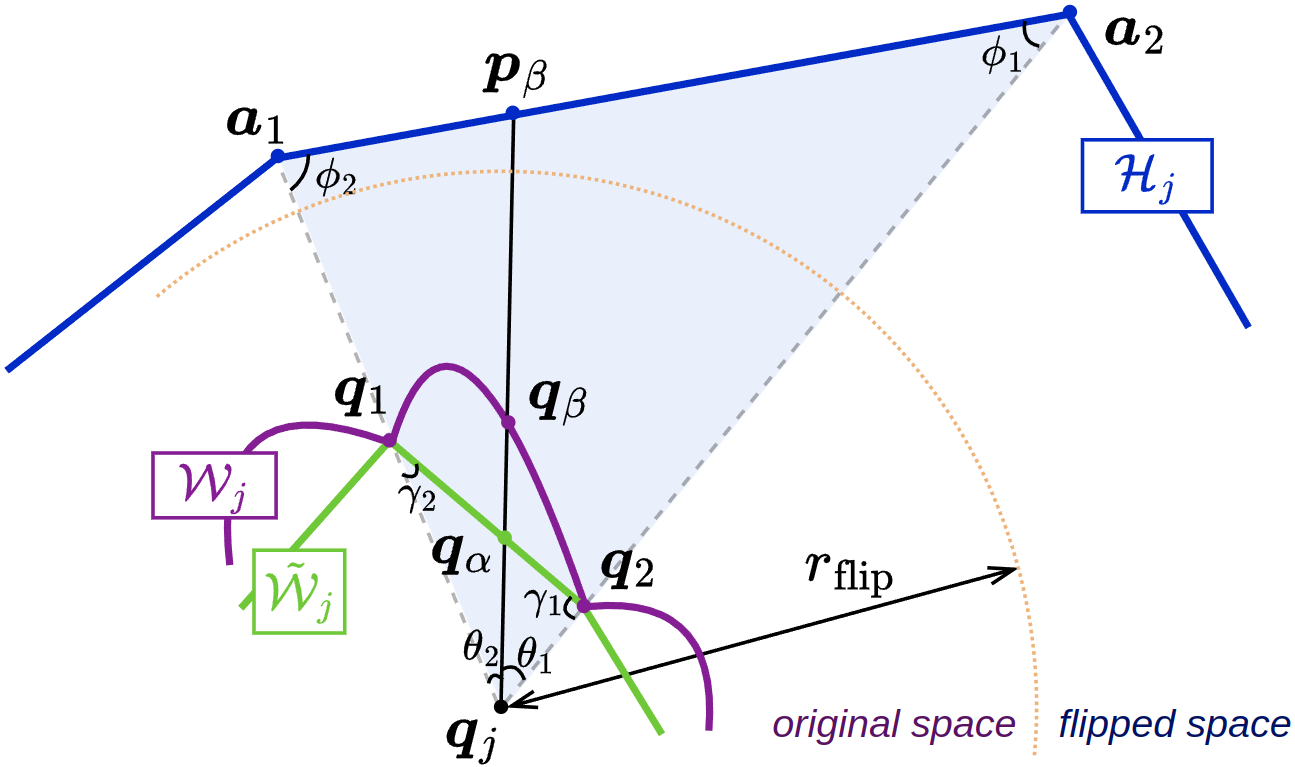}
\caption{Illustration of notations used in the proof of Prop.~\ref{prop_subset}.}
\label{fig_proof}
\end{figure}

\begin{proof}
As shown in Fig.~\ref{fig_proof}, the boundary of $\mathcal{W}_j$ is described by several segments of curves, each of which (\eg, $\widehat{\q_{1}\q_{2}}$) is obtained by flipping one edge (\eg, $\langle\boldsymbol{a}_{1},\boldsymbol{a}_{2}\rangle$) of the convex hull $\mathcal{H}_j$ from the flipped space back to the original space.
To proof $\tilde{\mathcal{W}}_j \subseteq \mathcal{W}_j$, it is equivalent to proof that $\forall k\in [1, 2, ..., K]$, $(\tilde{\mathcal{W}}_j \cap \triangle \q_{j}\boldsymbol{a}_{k}\boldsymbol{a}_{k+1} )\subseteq (\mathcal{W}_j \cap \triangle \q_{j}\boldsymbol{a}_{k}\boldsymbol{a}_{k+1})$, where $\triangle \q_{j}\boldsymbol{a}_{k}\boldsymbol{a}_{k+1}$ denotes the triangle formed by three points $(\q_{j},\boldsymbol{a}_{k},\boldsymbol{a}_{k+1})$, and $\langle\boldsymbol{a}_{k},\boldsymbol{a}_{k+1} \rangle$ is the $k$-th edge of $\mathcal{H}_j$.

We takes the edge $\langle\boldsymbol{a}_{1},\boldsymbol{a}_{2}\rangle$ of $\mathcal{H}_j$ as an example, as shown in Fig.~\ref{fig_proof}. 
The corresponding boundaries of $\mathcal{W}_j$ and $\tilde{\mathcal{W}}_j$ are denoted as a curve $\widehat{\q_{1}\q_{2}}$ and a line segment $\overline{\q_1\q_2}$ respectively. 
By construction, it holds that $\boldsymbol{a}_1 = f(\q_{1})$ and $\boldsymbol{a}_2 = f(\q_{2})$, where $f(\cdot)$ denotes the flipping operation.
We take a random point $\q_{\alpha} = \q_{2} + \alpha (\q_1 - \q_2)$ with $\alpha \in [0, 1]$ on $\overline{\q_1\q_2}$. 
The ray from $\q_j$ to $\q_{\alpha}$ will intersect with $\widehat{\q_{1}\q_{2}}$ at $\q_{\beta}$, and will also intersect with $\overline{\boldsymbol{a}_1\boldsymbol{a}_2}$ at $\boldsymbol{a}_{\beta}$, where $\boldsymbol{a}_{\beta}$ can be defined as $\boldsymbol{a}_{\beta} = \boldsymbol{a}_{2} + \beta (\boldsymbol{a}_1 - \boldsymbol{a}_2)$ with $\beta \in [0, 1]$. 
Note that $\beta$ can be different from $\alpha$.

By definition, both $\mathcal{W}_j$ and $\tilde{\mathcal{W}}_j$ are stat-convex polytopes with $\q_j$ being the center, $\ie$, for any point $\q \in \mathcal{W}_j$, the line segment $\overline{\q_j\q}$ lies entirely within $\mathcal{W}_j$.
The sample property also holds for $\tilde{\mathcal{W}}_j$.
% As we approximate the boundary of the visible region as line segments, as shown in Fig.~\ref{fig_proof}, any point $\q_{\beta}$ on the boundary $\widehat{\q_{1}\q_{2}}$ of $\mathcal{W}_j$ corresponds to a point $\q_{\alpha}$ on the line segment $\overline{\q_{1}\q_{2}}$ as a boundary of $\tilde{\mathcal{W}}_j$, with $\overline{\q_{j}\q_{\alpha}}$ and $\overline{\q_{j}\q_{\beta}}$ being colinear.
Therefore, to prove $\tilde{\mathcal{W}}_j\subseteq \mathcal{W}_j$, it is sufficient to prove that for any point $\q_{\alpha}$ on the boundary $\overline{\q_1\q_2}$, and the corresponding point $\q_{\beta}$ on the boundary $\widehat{\q_{1}\q_{2}}$, it holds that $\Vert\q_{j}\q_{\alpha}\Vert \le \Vert\q_{j}\q_{\beta}\Vert$.
Otherwise, if $\Vert\q_{j}\q_{\alpha}\Vert > \Vert\q_{j}\q_{\beta}\Vert$, the line segment $\overline{\q_{\alpha}\q_{\beta}}\subseteq \tilde{\mathcal{W}}_j$ and $\overline{\q_{\alpha}\q_{\beta}}\nsubseteq \mathcal{W}_j$, which contradicts with $\tilde{\mathcal{W}}_j\subseteq \mathcal{W}_j$.
In the following, we will prove that $\Vert\q_{j}\q_{\alpha}\Vert \le \Vert\q_{j}\q_{\beta}\Vert$, $\forall \alpha\in [0, 1]$.

First, by applying the Law of Sines in $\triangle \q_{j}\q_{\alpha}\q_2$, it holds that
\begin{equation}
    \frac{\alpha \cdot \Vert \q_{1}\q_{2}\Vert}{\sin{\theta_1}} = \frac{\Vert\q_{j}\q_{\alpha}\Vert}{\sin{\gamma_1}},
\label{eq_p1}
\end{equation}
and 
\begin{equation}
    \frac{(1-\alpha) \cdot \Vert\q_{1}\q_{2}\Vert}{\sin{\theta_2}} = \frac{\Vert\q_{j}\q_{\alpha}\Vert}{\sin{\gamma_2}}.
\label{eq_p2}
\end{equation}
Also in $\triangle \q_{j}\q_{1}\q_{2}$, it holds that 
\begin{equation}
    \frac{\Vert\q_{j}\q_1\Vert}{\sin{\gamma_1}} = \frac{\Vert\q_{j}\q_2\Vert}{\sin{\gamma_2}} = \frac{\Vert\q_{1}\q_2\Vert}{\sin{\theta}},
\label{eq_p3}
\end{equation}
where $\theta = \theta_1 + \theta_2$. 
By eliminating $\sin{\gamma_1}$ and $\sin{\gamma_2}$ in Eq.~(\ref{eq_p1}) and (\ref{eq_p2}), $\alpha$ can be expressed as
\begin{equation}
    \alpha = \frac{\sin{\theta_1}\cdot \Vert\q_{j}\q_{2}\Vert}{\sin{\theta_2}\cdot\Vert\q_{j}\q_{1}\Vert + \sin{\theta_1}\cdot\Vert\q_{j}\q_{2}\Vert}.
\label{eq_p4}
\end{equation}
Similarly, 
\begin{equation}
        \beta = \frac{\sin{\theta_1}\cdot \Vert\q_{j}\boldsymbol{a}_{2}\Vert}{\sin{\theta_2}\cdot\Vert\q_{j}\boldsymbol{a}_{1}\Vert + \sin{\theta_1}\cdot\Vert\q_{j}\boldsymbol{a}_{2}\Vert}.
\label{eq_p5}
\end{equation}
Combining Eq.~(\ref{eq_p1}) and Eq.~(\ref{eq_p3}), it holds that
\begin{equation}
    \Vert\q_{j}\q_{\alpha}\Vert = \frac{\alpha\cdot \Vert\q_{j}\q_{1}\Vert\cdot \sin{\theta}}{\sin{\theta_1}}.
\end{equation}
Similarly, 
\begin{equation}
        \Vert\q_{j}\boldsymbol{a}_{\beta}\Vert = \frac{\beta\cdot \Vert\q_{j}\boldsymbol{a}_{1}\Vert\cdot \sin{\theta}}{\sin{\theta_1}}.
\end{equation}
To prove $\Vert\q_{j}\q_{\alpha}\Vert \le \Vert\q_{j}\q_{\beta}\Vert$, it is equivalent to prove that 
\begin{equation}
    \begin{aligned}
        &
        2r_{\text{flip}} - \Vert\q_{j}\q_{\alpha}\Vert \ge \Vert\q_{j}\boldsymbol{a}_{\beta}\Vert\\
        \Rightarrow
        &
        2r_{\text{flip}}\sin{\theta_1} - \alpha\cdot \Vert\q_{j}\q_{1}\Vert \cdot \sin{\theta} \ge \beta \cdot \Vert\q_{j}\boldsymbol{a}_{1}\Vert \sin{\theta}\\
        \Rightarrow
        &
        2r_{\text{flip}} \ge \frac{\sin{\theta}}{\sin{\theta_1}}\cdot\alpha\cdot \Vert\q_{j}\q_{1}\Vert + \frac{\sin{\theta}}{\sin{\theta_1}}\cdot\beta \cdot \Vert\q_{j}\boldsymbol{a}_{1}\Vert = RHS
    \end{aligned}
\label{eq_p6}
\end{equation}
Taking Eq.~(\ref{eq_p4}) and (\ref{eq_p5}) into Eq.~(\ref{eq_p6}), the righ-hand-side (RHS) of Eq.~(\ref{eq_p6}) can be expanded as
\begin{equation}
\begin{aligned}
        RHS = &\frac{\sin{\theta}\cdot \Vert\q_{j}\q_{1}\Vert\cdot \Vert\q_{j}\q_{2}\Vert }{\sin{\theta_2}\cdot\Vert\q_{j}\q_{1}\Vert + \sin{\theta_1}\cdot\Vert\q_{j}\q_{2}\Vert} + \\
        &
        \frac{\sin{\theta}\cdot \Vert\q_{j}\boldsymbol{a}_{1}\Vert\cdot \Vert\q_{j}\boldsymbol{a}_{2}\Vert }{\sin{\theta_2}\cdot\Vert\q_{j}\boldsymbol{a}_{1}\Vert + \sin{\theta_1}\cdot\Vert\q_{j}\boldsymbol{a}_{2}\Vert}\\
        = &\frac{\Vert\q_{j}\q_{1}\Vert\cdot \Vert\q_{j}\q_{2} \Vert}{\frac{\sin{\theta_2}}{\sin{\theta}}\cdot\Vert\q_{j}\q_{1}\Vert + \frac{\sin{\theta_1}}{\sin{\theta}}\cdot\Vert\q_{j}\q_{2}\Vert} + \\
        &
        \frac{\Vert\q_{j}\boldsymbol{a}_{1}\Vert\cdot \Vert\q_{j}\boldsymbol{a}_{2}\Vert }{\frac{\sin{\theta_2}}{\sin{\theta}}\cdot\Vert\q_{j}\boldsymbol{a}_{1}\Vert + \frac{\sin{\theta_1}}{\sin{\theta}}\cdot\Vert\q_{j}\boldsymbol{a}_{2}\Vert}
\end{aligned}
\label{eq_p7}
\end{equation}
The function $h_{\theta}(x) = \frac{\sin{x}}{\sin{\theta}} \ge \frac{x}{\theta}$ when $x \in [0, \theta]$, and $\frac{\sin{x}}{\sin{\theta}} = \frac{x}{\theta}$ only when $x = 0$ or $x = \theta$.
Therefore, the RHS of Eq.~(\ref{eq_p7}) satisfies
\begin{equation}
\small
    RHS \le 
    \frac{\Vert\q_{j}\q_{1}\Vert\cdot \Vert\q_{j}\q_{2}\Vert }{\frac{\theta2}{\theta}\Vert\q_{j}\q_{1}\Vert + \frac{\theta_1}{\theta}\Vert\q_{j}\q_{2}\Vert} + 
        \frac{\Vert\q_{j}\boldsymbol{a}_{1}\Vert\cdot \Vert\q_{j}\boldsymbol{a}_{2}\Vert }{\frac{\theta_2}{\theta}\Vert\q_{j}\boldsymbol{a}_{1}\Vert + \frac{\theta_1}{\theta}\Vert\q_{j}\boldsymbol{a}_{2}\Vert}.
\label{eq_rhs_le}
\end{equation}
As $\theta_1$, $\theta_2$, $\theta> 0$, and $\theta_1 + \theta_2 = \theta$, we define $\eta = \frac{\theta_2}{\theta} \in [0, 1]$ to replace  $\theta_1$, $\theta_2$ and $\theta$. 
The Eq.~(\ref{eq_rhs_le}) can be written as 
\begin{equation}
\small
        RHS \le 
    \frac{\Vert\q_{j}\q_{1}\Vert\cdot \Vert\q_{j}\q_{2}\Vert }{\eta\Vert\q_{j}\q_{1}\Vert + (1-\eta)\Vert\q_{j}\q_{2}\Vert} + 
        \frac{\Vert\q_{j}\boldsymbol{a}_{1}\Vert\cdot \Vert\q_{j}\boldsymbol{a}_{2}\Vert }{\eta\Vert\q_{j}\boldsymbol{a}_{1}\Vert + (1-\eta)\Vert\q_{j}\boldsymbol{a}_{2}\Vert}.
\label{eq_p8}
\end{equation}
Note that $\Vert\q_{j}\boldsymbol{a}_{1}\Vert = 2r_{\text{flip}} - \Vert\q_{j}\q_{1}\Vert$, $\Vert\q_{j}\boldsymbol{a}_{2}\Vert = 2r_{\text{flip}} - \Vert\q_{j}\q_{2}\Vert$, we have $\eta\cdot\Vert\q_{j}\boldsymbol{a}_{1}\Vert + (1-\eta)\cdot\Vert\q_{j}\boldsymbol{a}_{2}\Vert = 2r_{\text{flip}} - (\eta\cdot\Vert\q_{j}\q_{1}\Vert + (1-\eta)\cdot\Vert\q_{j}\q_{2}\Vert)$.
Let $x$ be an auxiliary variable defined as $x = \eta\cdot\Vert\q_{j}\q_{1}\Vert + (1-\eta)\cdot\Vert\q_{j}\q_{2}\Vert$, which takes values from $\min\{\Vert\q_{j}\q_{1}\Vert, \Vert\q_{j}\q_{2}\Vert\}$ to $\max\{\Vert\q_{j}\q_{1}\Vert, \Vert\q_{j}\q_{2}\Vert\}$.
The Eq.~(\ref{eq_p8}) can be rewritten as 
\begin{equation}
    RHS \le \frac{\Vert\q_{j}\q_{1}\Vert\cdot \Vert\q_{j}\q_{2} \Vert}{x} + 
        \frac{\Vert\q_{j}\boldsymbol{a}_{1}\Vert\cdot \Vert\q_{j}\boldsymbol{a}_{2}\Vert }{2r_{\text{flip}} - x} = g(x).
\end{equation}
The derivative of $g(x)$ is calculated as
\begin{equation}
\begin{aligned}
    \frac{\partial g(x)}{\partial x} 
    &= \frac{-\Vert\q_{j}\q_{1}\Vert\cdot \Vert\q_{j}\q_{2}\Vert}{x^2} + \frac{-\Vert\q_{j}\boldsymbol{a}_{1}\Vert\cdot \Vert\q_{j}\boldsymbol{a}_{2}\Vert}{(2r_{\text{flip}} - x)^2}\\
    &
    =\frac{Ax^2 + Bx - C}{x^{2}(2r_{\text{flip}} -x)^2},
\end{aligned}
\end{equation}
where $A = \Vert\q_{j}\boldsymbol{a}_{1}\Vert\cdot \Vert\q_{j}\boldsymbol{a}_{2}\Vert - \Vert\q_{j}\q_{1}\Vert\cdot \Vert\q_{j}\q_{2}\Vert$, $B = 4r_{\text{flip}}\cdot\Vert\q_{j}\q_{1}\Vert\cdot \Vert\q_{j}\q_{2}\Vert$, $C = 4r_{\text{flip}}^{2}\cdot \Vert\q_{j}\q_{1}\Vert\cdot \Vert\q_{j}\q_{2}\Vert$.
The sign of $\frac{\partial g(x)}{\partial x}$ depends on the quadratic function $Ax^2 + Bx - C$.
Since $\Vert\q_{j}\boldsymbol{a}_{1}\Vert> r_{\text{flip}}$, $\Vert\q_{j}\q_{1}\Vert< r_{\text{flip}}$, we have $\Vert\q_{j}\boldsymbol{a}_{1}\Vert > \Vert\q_{j}\q_{1}\Vert$. Similarly, it holds that $\Vert\q_{j}\boldsymbol{a}_{2}\Vert > \Vert\q_{j}\q_{2}\Vert$.
Therefore, $A > 0$; the axis of symmetry is $\frac{-B}{2A} < 0$; and the function has two roots because the discriminant $B^2 + 4AC > 0$.
Therefore, according to the properties of a univariate quadratic curve, the function $g(x)$ has two possible trends within the domain of $x$: (1) $g(x)$ decreases first and then increases; (2) $g(x)$ increases monotonically.
In both cases, the maximum value of $g(x)$ is achieved on the boundary of the domain of $x$.
When $x = \Vert\q_{j}\q_1\Vert$, $g(x) = 2r_{\text{flip}}$; and when $x = \Vert\q_{j}\q_2\Vert$, $g(x) = 2r_{\text{flip}}$.
Therefore, it holds that
\begin{equation}
    RHS \le g(x) \le 2r_{\text{flip}},
\end{equation}
which proves Eq.~\ref{eq_p6}.
Consequently, $\Vert\q_{j}\q_{\alpha}\Vert \le \Vert\q_{j}\q_{\beta}\Vert$, $\forall \alpha\in [0, 1]$.
And for a convex hull $\mathcal{H}_j$ with $K$ edges, it holds that $(\tilde{\mathcal{W}}_j \cap \triangle \q_{j}\boldsymbol{a}_{k}\boldsymbol{a}_{k+1} )\subseteq (\mathcal{W}_j \cap \triangle \q_{j}\boldsymbol{a}_{k}\boldsymbol{a}_{k+1})$, $\forall k\in [1, ..., K]$.
This concludes the proof of $\tilde{\mathcal{W}}_j \subseteq \mathcal{W}_j$.
\end{proof}

With Prop.~\ref{prop_subset}, we have the following corollary:
\begin{corollary}
    $\forall k\in [1, \cdots, K]$, the region $\mathcal{W}_j \cap \triangle \q_{j}\boldsymbol{a}_{k}\boldsymbol{a}_{k+1}$ is convex, where $\triangle \q_{j}\boldsymbol{a}_{k}\boldsymbol{a}_{k+1}$ denotes the triangle formed by three points $\langle\q_{j},\boldsymbol{a}_{k},\boldsymbol{a}_{k+1}\rangle$.
\label{prop_convex}
\end{corollary}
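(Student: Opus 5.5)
Within the sector spanned by the $k$-th edge, $\mathcal{W}_j$ is the preimage under $f$ of the part of the flipped triangle on the robot's side of the edge. The boundary arc $\widehat{\q_k\q_{k+1}}$ is the flip of the segment $\overline{\boldsymbol{a}_k\boldsymbol{a}_{k+1}}$. I will therefore describe that arc in polar coordinates centered at $\q_j$ and show that, in the 2D case, the region under it is convex.

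\textbf{Step 1 (polar form of the arc).} Let $\rho(\phi)$ be the distance from $\q_j$ to the point of $\overline{\boldsymbol{a}_k\boldsymbol{a}_{k+1}}$ at polar angle $\phi$. Writing the supporting line as $\n_k^{\top}\p = c$ with $c>0$ (the origin lies strictly inside the hull), we get $\rho(\phi) = c/\cos(\phi-\phi_0)$. The inverted boundary is then the polar graph
\[ r(\phi) = 2r_{\text{flip}} - \rho(\phi), \qquad \phi \in [\phi_k,\phi_{k+1}]. \]
The region $\mathcal{W}_j \cap \triangle \q_{j}\boldsymbol{a}_{k}\boldsymbol{a}_{k+1}$ is $\{(r,\phi): 0\le r\le r(\phi),\ \phi\in[\phi_k,\phi_{k+1}]\}$.

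\textbf{Step 2 (reduce convexity to a local condition).} A star-shaped region of this form with sector angle $\theta_k<\pi$ is convex if and only if the boundary curve $r(\phi)$ is locally convex toward the origin. The standard criterion for this is
\[ r^2 + 2r'^2 - r\,r'' \ge 0. \]
The two straight sides through $\q_j$ cause no difficulty, because the sector angle is less than $\pi$. I would justify this criterion by the equivalent condition that $1/r(\phi)$ satisfies $u''+u\ge 0$.

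\textbf{Step 3 (verify the inequality).} Set $u = 1/r$. Since $\rho = c\sec\psi$ with $\psi = \phi-\phi_0$, we have $r = 2r_{\text{flip}} - c\sec\psi$. I expect this to be the main obstacle: showing $u''+u\ge 0$ whenever $r>0$, which holds because $\Vert \boldsymbol{a}\Vert > r_{\text{flip}}$ on the hull edge. I would first compute
\[ r' = -c\sec\psi\tan\psi, \qquad r'' = -c\,(\sec\psi\tan^2\psi + \sec^3\psi). \]
Substituting gives
\[ r^2 + 2r'^2 - r r'' = r^2 + 2c^2\sec^2\psi\tan^2\psi + r\,c\,(\sec\psi\tan^2\psi + \sec^3\psi). \]
Every term is nonnegative when $r>0$ and $\sec\psi>0$. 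The condition $\sec\psi>0$ holds because $|\psi|<\pi/2$ along a hull edge seen from an interior point. Hence the boundary curve bends convexly and the computation closes.

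\textbf{Step 4 (conclude).} Combining the local convexity of the arc with the two straight radial sides shows that the region $\mathcal{W}_j \cap \triangle \q_{j}\boldsymbol{a}_{k}\boldsymbol{a}_{k+1}$ is convex. If the sector angle of an edge were at least $\pi$, I would first split the triangle by additional rays from $\q_j$. The interpolation in Alg.~\ref{alg_approx_visible} already guarantees small sector angles. In any case, a hull edge viewed from an interior point always subtends an angle less than $\pi$.
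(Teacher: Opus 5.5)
Your argument is correct, but it reaches the key fact by a different route from the paper.

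\textbf{The paper's route.} The paper argues synthetically. For $\p_1,\p_2$ in the region, it takes the boundary points $\q_1,\q_2$ on the rays through them. It notes that every $\p\in\overline{\p_1\p_2}$ lies in $\triangle\q_j\q_1\q_2$, hence radially below the chord $\overline{\q_1\q_2}$. It then uses the chord-below-arc inequality, which it says is proved \emph{similarly} to Prop.~\ref{prop_subset} (Law of Sines, $\sin x/\sin\theta\ge x/\theta$, and maximizing the auxiliary function $g$). This route stays within elementary trigonometry and reuses Prop.~\ref{prop_subset}. However, the \emph{similarly} tacitly needs $\Vert\q_j\q_1\Vert<r_{\text{flip}}$ to get $A>0$. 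That holds at preimages of hull vertices, but not necessarily at arbitrary arc points. The conclusion still survives, because $g$ is convex on $(0,2r_{\text{flip}})$.

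\textbf{Your route.} You replace that trigonometric lemma with a curvature computation for the polar graph $r=2r_{\text{flip}}-c\sec\psi$. This is self-contained, gives strict convexity, and uses only $r>0$ and $\cos\psi>0$.

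\textbf{Closing the local-to-global step.} You call the passage from local curvature to global convexity \emph{standard}. It is best closed by Sturm comparison. Let $s=a\cos\phi+b\sin\phi$ match $u=1/r$ at $\phi_a<\phi_b$ with $\phi_b-\phi_a<\pi$. Then $w=u-s$ satisfies $w''+w\ge 0$ with zero boundary values, so $w\le 0$. That is, every chord lies radially inside the arc. The paper's star-convexity argument then finishes the proof. In this form your computation also reproves the chord inequality for all pairs of arc points, hence the key step of Prop.~\ref{prop_subset}.

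\textbf{Minor slips.}
\begin{itemize}
\item Positivity of $r$ comes from $\rho(\phi)\le\max\{\Vert\boldsymbol{a}_k\Vert,\Vert\boldsymbol{a}_{k+1}\Vert\}<2r_{\text{flip}}$. It does not come from $\Vert\boldsymbol{a}\Vert>r_{\text{flip}}$, which can fail at interior points of an edge.
\item Your opening sentence has the side reversed. Points of $\mathcal{W}_j$ flip to points beyond the edge, as your Step~1 formula correctly encodes.
\item $\mathcal{W}_j\cap\triangle\q_j\boldsymbol{a}_k\boldsymbol{a}_{k+1}$ coincides with $\{0\le r\le r(\phi)\}$ only where $\rho(\phi)\ge r_{\text{flip}}$. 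Elsewhere it is the intersection of your convex sector region with the triangle, which is still convex. The paper makes the same implicit assumption.
\end{itemize}
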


\begin{figure}[!h]
\centering
\includegraphics[width=.9\linewidth]{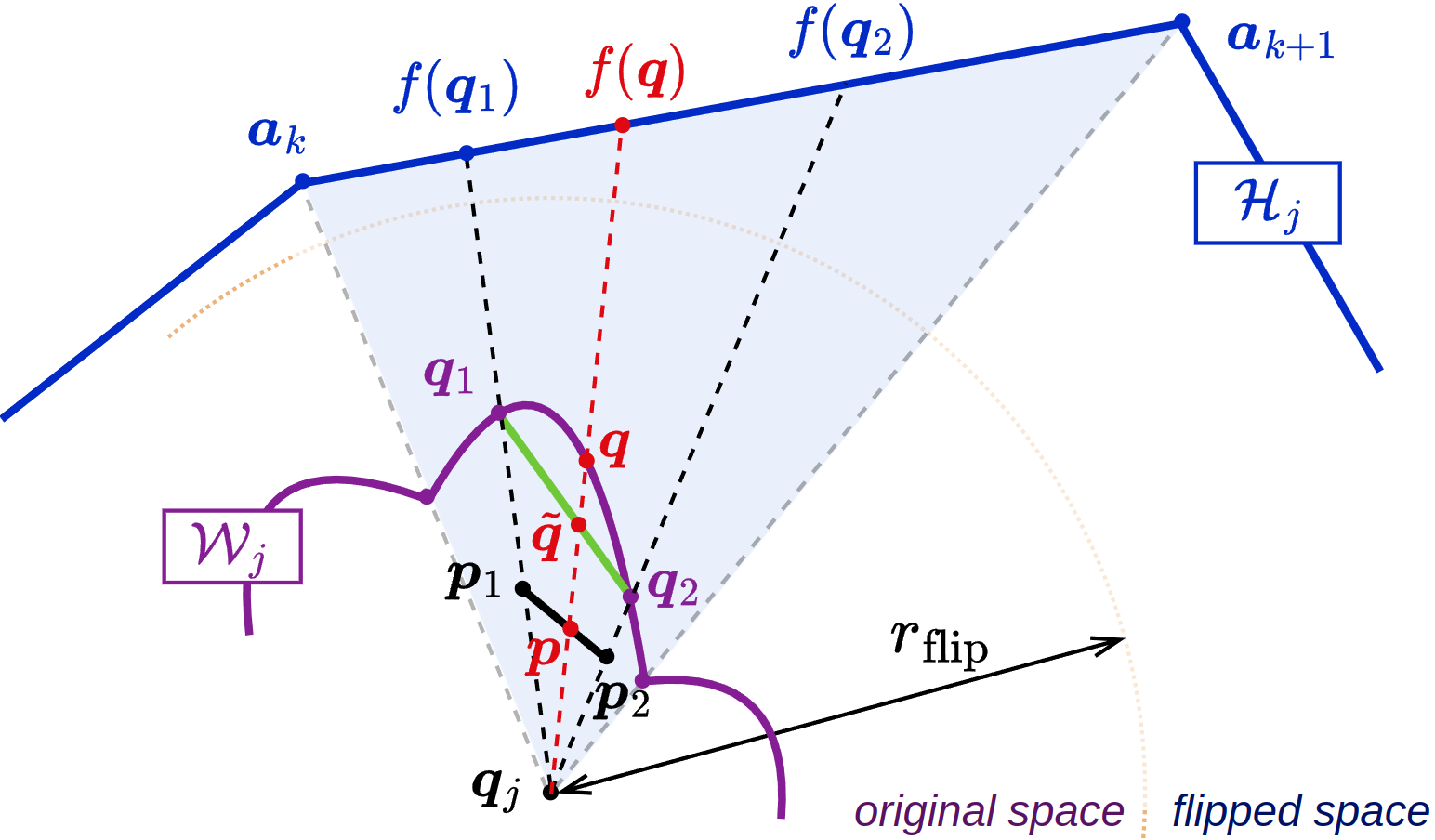}
\caption{Illustration of notations used in the proof of Cor.~\ref{prop_convex}.}
\label{fig_proof2}
\end{figure}

\begin{proof}
Let $\p_{1}$, $\p_{2}$ be two random points within $\mathcal{W}_j \cap \triangle \q_{j}\boldsymbol{a}_{k}\boldsymbol{a}_{k+1}$, as shown in Fig.~\ref{fig_proof2}.
If $\forall \alpha\in [0, 1]$, $\alpha\p_{1} + (1 - \alpha)\p_{2} \in \mathcal{W}_j \cap \triangle \q_{j}\boldsymbol{a}_{k}\boldsymbol{a}_{k+1}$, then the region $\mathcal{W}_j \cap \triangle \q_{j}\boldsymbol{a}_{k}\boldsymbol{a}_{k+1}$ is convex.

By construction, the region $\mathcal{W}_j \cap \triangle \q_{j}\boldsymbol{a}_{k}\boldsymbol{a}_{k+1}$ is star-convex. 
Therefore, the ray originating from $\p_j$ to $\p_1$, $\q_2$ will intersect with the boundary of $\mathcal{W}_j$ at $\q_1$ and $\q_2$, respectively; and it holds that $\Vert \p_1 \Vert \le \Vert \q_{1}\Vert$ and $\Vert \p_2 \Vert \le \Vert \q_{2}\Vert$.
Let $\p = \alpha\p_{1} + (1 - \alpha)\p_{2}$ be a randomly selected point on the line segment $\overline{\p_{1}\p_{2}}$, and the ray from $\q_j$ to $\p$ intersect with $\overline{\q_{1}\q_{2}}$, $\mathcal{W}_j$ and $\overline{\boldsymbol{a}_{k}\boldsymbol{a}_{k+1}}$ at $\tilde{\q}$, $\q$ and $f(\q)$ respectively, as shown in Fig.~\ref{fig_proof2}.
If $\Vert f(\p)\Vert > \Vert f(\q)\Vert$, it holds that $\p\in \mathcal{W}_j \cap \triangle \q_{j}\boldsymbol{a}_{k}\boldsymbol{a}_{k+1}$ according to visibility determination as in Prop.~\ref{prop_visible_determiniation}, which then proves Cor.~\ref{prop_convex}.

Recall that $f(\cdot)$ is the spherical flipping function.
It holds that 
\begin{equation}
    \Vert f(\p) \Vert = 2r_{\text{flip}} - \Vert \p \Vert
    \ge
    2r_{\text{flip}} - \Vert \tilde{\q} \Vert = \Vert f(\tilde{\q})\Vert.
\end{equation}
Note that the second inequality holds because $\p\in \triangle\q_{j}\q_{1}\q_{2}$, and $\triangle\q_{j}\q_{1}\q_{2}$ is also star-convex.
We have proved similarly in the proof of Prop.~\ref{prop_subset} that $\Vert f(\tilde{\q}) \Vert \ge \Vert f(\q) \Vert$. 
Therefore, $\Vert f(\p) \Vert \ge \Vert f(\q) \Vert$; and $\p\in \mathcal{W}_j \cap \triangle \q_{j}\boldsymbol{a}_{k}\boldsymbol{a}_{k+1}$, $\forall \alpha \in [0, 1]$.
This proves that the region $\mathcal{W}_j \cap \triangle \q_{j}\boldsymbol{a}_{k}\boldsymbol{a}_{k+1}$ is convex.
\end{proof}

\subsection{Proof of Proposition~\ref{prop_lower_bound}}

\begin{proof}
We prove Prop.~\ref{prop_lower_bound} by contradiction. 
Assume that the approximated LoS-distance $\tilde{d}^{\text{los}}_{ji} > d^{\text{los}}_{ji}$, the circle $\tilde{\mathcal{S}}$ (or shpere in 3D) centered at robot $i$ with radius $\tilde{d}^{\text{los}}_{ji}$ will cover a larger area than the circle $\mathcal{S}$ with radius $d^{\text{los}}_{ji}$, \ie, $\mathcal{S} \subset \tilde{\mathcal{S}}\subseteq \tilde{\mathcal{W}}_j$.
According to Prop.~\ref{prop_subset}, $\tilde{\mathcal{W}}_j\subseteq \mathcal{W}_j$, it holds that $ \tilde{\mathcal{S}}\subseteq \tilde{\mathcal{W}}_j\subseteq \mathcal{W}_j$.
Therefore, the distance from $\q_j$ to the boundary of $\mathcal{W}_j$ is not smaller than $\tilde{d}^{\text{los}}_{ji}$, \ie, $d^{\text{los}}_{ji} \ge\tilde{d}^{\text{los}}_{ji}$, which contradicts with $\tilde{d}^{\text{los}}_{ji} > d^{\text{los}}_{ji}$.
This proves Prop.~\ref{prop_lower_bound}.
\end{proof}

\end{document}